

\documentclass{aims} 
\usepackage{amsmath}
\usepackage{paralist}
\usepackage[authoryear]{natbib}
\usepackage{mathrsfs}
\usepackage{mathtools}
\usepackage[misc]{ifsym}
\usepackage{epsfig} 
\usepackage{epstopdf} 
\usepackage[colorlinks=true]{hyperref}
\hypersetup{urlcolor=blue, citecolor=red}
\usepackage[linesnumbered,ruled,vlined]{algorithm2e}
\usepackage{url}
\usepackage{algorithmic}
\usepackage{caption}
\usepackage{subfigure,graphicx}
\usepackage{comment}
\usepackage{bm}
\usepackage{tabularx}
\allowdisplaybreaks

\textheight=8.2 true in
 \textwidth=5.0 true in
  \topmargin 30pt
   \setcounter{page}{1}




\newtheorem{theorem}{Theorem}[section]
\newtheorem{corollary}[theorem]{Corollary}

\theoremstyle{definition}

\DeclareMathOperator{\dive}{div}
\DeclareMathOperator{\DKL}{D_{KL}}

\newcommand{\rev}[1]{\textcolor{black}{#1}}

\title[Diffusion map particle systems for generative modeling]
{Diffusion map particle systems for generative modeling} 

\author[Fengyi Li and Youssef Marzouk]{}

\subjclass{65C35 60J60 47B34}
\keywords{Diffusion maps, kernel methods, gradient flows, generative modeling, sampling}


\thanks{$^*$Corresponding author: Fengyi Li}


\begin{document}
\maketitle

\centerline{\scshape
Fengyi Li$^{{\href{fengyil@mit.edu}{\textrm{\Letter}}}*}$
and Youssef Marzouk$^{{\href{ymarz@mit.edu}{\textrm{\Letter}}}}$}

\medskip

{\footnotesize
 \centerline{Massachusetts Institute of Technology, Cambridge, MA 02139 USA}
} 

\bigskip

 \centerline{(Communicated by Handling Editor)}


\begin{abstract}
We propose a novel diffusion map particle system (DMPS) for generative modeling, based on diffusion maps and Laplacian-adjusted Wasserstein gradient descent (LAWGD). Diffusion maps are used to approximate the generator of the corresponding Langevin diffusion process from samples, and hence to learn the underlying data-generating manifold. On the other hand, LAWGD enables efficient sampling from the target distribution given a suitable choice of kernel, which we construct here via a spectral approximation of the generator, computed with diffusion maps. Our method requires no offline training and minimal tuning, and can outperform other approaches on data sets of moderate dimension.
\end{abstract}


\section{Introduction}\label{sec:intro}

Generative modeling is a central task in fields such as computer vision \citep{cv1,cv2,cv3,cv4,cv5,cv6} and natural language processing \citep{nlp1,nlp2,nlp3,nlp4}, and applications ranging from medical image analysis \citep{med_img1, med_img2, med_img3, med_img4} to protein design \citep{protein1, protein2}. 
\rev{Given samples from a probability distribution of interest, the goal of generative modeling is to generate additional samples from the same distribution, without knowledge of its unnormalized density.}
Despite their successes, popular generative models such as variational auto-encoders (VAE) \citep{vae1, vae2}, generative adversarial networks (GAN) \citep{gan}, and diffusion models or score-based generative models (SGM) \citep{diffusion1, diffusion2, diffusion3, diffusion4}, typically need careful hyperparameter tuning \citep{hyper1, hyper2} and may involve long convergence times, e.g., for Langevin-type sampling \citep{diffusion_time}. The performance of such methods highly depends on the architecture and the choice of parameters of deep neural networks \citep{parameter_tuning, FinetuningGM}, which, all too often, require expert knowledge.

In this paper, we propose a new nonparametric kernel-based approach to generative modeling, based on diffusion maps and interacting particle systems. 


Diffusion maps \citep{diffusion_map1, diffusion_map2, diffusion_map3, diffusion_map_approx}, along with many other graph-based methods \citep{eigenmaps, isomap, LLE}, have mainly been used as a tool for nonlinear dimension reduction. The kernel matrix, constructed using pairwise distances between samples with proper normalization, approximates the generator of a Langevin diffusion process. This approximation becomes exact as the number of samples goes to infinity and the kernel bandwidth goes to zero. Construction of the kernel matrix using smooth kernels (e.g., Gaussians) enables one to compute the inverse of the eigenvalues and the gradients of the eigenfunctions analytically. 

Separately, the notion of gradient flow \citep{GF1,GF2, GF3} underlies a very active field of research and offers a unifying perspective on sampling and optimization \citep{jko, Wibisono2018Sampling}, with numerous connections to partial differential equations and differential geometry \citep{Villani2008OptimalTO, GF_PDE1}. Many common sampling algorithms approximate the gradient flow (of some functional) on a space of probability measures. The unadjusted Langevin algorithm (ULA) is a canonical example, and follows from the time discretization of a Langevin SDE. But many other particle systems, particularly interacting particle systems, approximate gradient flows: examples include Stein variational gradient descent (SVGD) \citep{svgd1, Liu2016SVGD}, affine-invariant interacting Langevin dynamics (ALDI) \citep{ALDI}, and Laplacian-adjusted Wasserstein gradient descent (LAWGD) \citep{Chewi2020SVGD}. ULA, SVGD, and other algorithms access the target distribution via the gradient of its log-density, and hence many of these methods have become popular for Bayesian inference.


Our approach combines these two ideas by using diffusion maps to directly approximate (the gradient of the inverse of) the generator of the Langevin diffusion process from samples, and in turn using this approximation within LAWGD to produce more samples efficiently. \rev{The only input is a set of samples from the target distribution, and thus our approach enables generative modeling.}
Compared to other generative modeling methods, our approach has several advantages. First, the use of diffusion maps facilitates accurate sampling from distributions supported on manifolds, particularly when the dimension of the manifold is lower than that of the ambient space. Second, we demonstrate accurate sampling from distributions with (a priori unknown) bounded support. Both of these features are in contrast with methods driven only by approximations of the local score: we conjecture that such methods are less able to detect the overall geometry of the target distribution, whereas our approach harnesses graph-based methods that are widely used for nonlinear dimension reduction to approximate the generator as a whole. Finally, our method is quite simple and computationally efficient comparing to training a neural network (e.g., as in score-based generative modeling) \citep{diffusion2} or normalizing flows \citep{reg_flow_Caterini, flow_ho}: the only parameter that needs to be tuned is the kernel bandwidth, and no offline training is required. 



\section{Notation and preliminaries}\label{sec:notation}

We use $\pi$ to denote the target distribution, i.e., the distribution we would like to sample from, \rev{and assume it is supported on a compact manifold $\mathcal D$ embedded in $\mathbb R^d$.} We let $V: \mathcal{D} \to \mathbb{R}$ denote the potential associated with this distribution, such that $\pi \propto \exp(-V)$. \rev{All distributions are assumed to have densities with respect to Lebesgue measure on $\mathbb{R}^d$, and we abuse notation by using the same symbol to denote a measure and its density.} The kernel $K(\cdot, \cdot): \mathcal D \times \mathcal D\rightarrow \mathbb R$ is assumed to be differentiable with respect to both arguments, and we use $\nabla_1 K(\cdot, \cdot)$, $\nabla_2 K(\cdot, \cdot)$ to denote the (Euclidean) gradient of the kernel with respect to its first and second arguments, respectively. We assume sufficient regularity to exchange the order of integration and differentiation (Leibniz integral rule) throughout. 
 

\section{A review of Wasserstein gradient flow and LAWGD} \label{sec:GF and LAWGD}
We first review the basics of Wasserstein gradient flow and recall the LAWGD algorithm of \citet{Chewi2020SVGD}.
\subsection{Gradient flow on Wasserstein space}
Let $F(\mu)$ be a functional over the space of probability measures, i.e., $F: \mathcal P_2(\mathbb R^d) \rightarrow \mathbb R$, \rev{where $\mathcal P_2$ is the space of probability measures that have finite second moment.} We seek to steer the measure $\mu_t$ (at time $t$) in the direction of steepest descent, defined by $F$ and a chosen metric. That is, $ \frac{\partial \mu_t}{\partial t} = -\nabla_{W_2} F(\mu_t)$, where $ \nabla_{W_2}$ denotes the general gradient in Wasserstein metric. Under some smoothness assumptions, we can write this as
\begin{align}\label{GF}
    \frac{\partial \mu_t}{\partial t} = \dive(\mu_t \nabla \delta F(\mu_t)),
\end{align}
where $\delta F(\mu)$ is the first variation of $F$ evaluated at $\mu$ \citep{OT_villani}. If we choose the functional $F$ to be the Kullback--Leibler (KL) divergence, $F(\mu) = \DKL(\mu||\pi)$, then \eqref{GF} becomes 
\begin{align*}
    \frac{\partial \mu_t}{\partial t} = \dive \left ( \mu_t \nabla \log \frac{d\mu_t}{d\pi} \right ),
\end{align*}
which is the Fokker--Planck equation \citep{jko}. The measure $\mu_t$ can be approximated by particles evolving according to the following dynamic,
\begin{align*}
    \dot{x} = - \nabla \log \frac{d\mu_t}{d\pi}(x).
\end{align*}
Forward Euler discretization with stepsize $h$ then yields the following numerical scheme,
\begin{align*}
    x_{t+1} = x_t - h \nabla \log \frac{d\mu_t}{d\pi}(x_t).
\end{align*}

\subsection{LAWGD algorithm}
One challenge with the scheme above is that the measure $\mu_t$ is intractable at time $t$. To solve this problem, SVGD \citep{Liu2016SVGD} implements the following kernelized dynamics (in the continuum limit),
\begin{align*}
    \dot{x} &= -\int  K(x, y)\nabla \log \frac{d \mu_t}{d\pi}(y)d\mu_t(y).
\end{align*} 
The expression above can be equivalently written as 
\begin{align}\label{svgd_sample}
    \dot{x}= -\int K(x, y) \nabla\frac{d\mu_t}{d\pi}(y) d\pi(y).
\end{align}
Define 
\begin{align*}
    \mathcal K_{\pi}  f (x) \coloneqq \int  K(x, y)f(y) d\pi(y).
\end{align*}
Then we write \eqref{svgd_sample} as
\begin{align*}
    \dot{x} = -\mathcal K_{\pi} \nabla \frac{d\mu_t}{d\pi}(x),
\end{align*}
and under SVGD, the density evolves according to 
\begin{align*}
    \partial_t \mu_t = \dive \left (\mu_t \mathcal K_{\pi} \nabla\frac{d\mu_t}{d\pi} \right ).
\end{align*}
On the other hand, LAWGD makes the JKO scheme implementable by considering the following kernelization
\begin{align*}
    \dot{x} = - \nabla \mathcal K_{\pi} \frac{d\mu_t}{d\pi}(x),
\end{align*}
and expanding it to obtain
\begin{align}
    \dot{x} &= -\int \nabla_1 K(x,y) \frac{d\mu_t}{d\pi}(y) d\pi(y)
    = -\int \nabla_1 K(x,y) d\mu_t(y).\label{LAWGD_dynamics}
\end{align}
\rev{The kernel $K(\cdot, \cdot)$ is specifically chosen in such a way that  $\mathcal K_\pi = \mathscr{L}^{-1}$}, where $-\mathscr L= \nabla^2  - \langle \nabla V, \nabla \cdot \rangle$ is generator of the Langevin diffusion process, \rev{and $\nabla^2$ denotes the Laplacian operator.} Here we assume $\mathscr L$ has discrete spectrum (see Appendix \ref{appendix:spectrum}). This choice is motivated by the rate of change of KL divergence, 
\begin{align}\label{conv_rate}
     \partial_t \DKL(\mu_t || \pi) = -\mathbb E_\pi \left[ \frac{d\mu_t}{d\pi} \mathscr L \mathcal K_\pi \frac{d\mu_t}{d\pi}\right].
\end{align}
Indeed, such a choice yields
\begin{align}
    \partial_t \DKL(\mu_t||\pi) &
    = -\mathbb E_\pi\left[ \left(\frac{d\mu_t}{d\pi} -1 \right)^2\right] = -\chi^2(\mu_t|| \pi).\label{chi_rate}
\end{align}
The evolution of the density under LAWGD thus follows
\begin{align*}
    \partial_t \mu_t = \dive \left (\mu_t  \nabla\mathscr L^{-1}\frac{d\mu_t}{d\pi} \right ).
\end{align*}
\rev{Here $\mathscr L^{-1}$ is the pseudo-inverse of the generator of the Langevin diffusion process. In practice, we write $\mathscr{L}$ using its spectral decomposition and define $\mathscr{L}^{-1}$ using all of the eigenpairs of $\mathscr{L}$ except for the one corresponding to the zero eigenvalue. See Appendix~\ref{appendix:spectrum} for more details.} Now suppose we initialize $\{x_0^i\}_{i = 1}^M \sim \mu_0$. We then obtain a discrete algorithm from \eqref{LAWGD_dynamics}, where the update step reads
\begin{align}\label{update_step}
    x_{t+1}^{i} = x_{t}^i - \frac h M \sum_{j = 1}^M \nabla_1 K_{\mathscr L^{-1}} (x_{t}^i, x_{t}^j). 
\end{align}
Here $K_{\mathscr L^{-1}}$ can be understood as a kernelized version of $\mathscr L^{-1}$, satisfying $\mathscr L^{-1} f(x)=  \int K_{\mathscr L^{-1}}(x,y) f(y) d\pi(y)$. In particular, setting $f = \frac{d\mu_t}{d\pi}$, we have $\mathscr L^{-1} \frac{d\mu_t}{d\pi}(x)=  \int K_{\mathscr L^{-1}}(x,y) \frac{d\mu_t}{d\pi}(y) d\pi(y) =  \int K_{\mathscr L^{-1}}(x,y) d\mu_t(y).$ More details about LAWGD can be found in \citet{Chewi2020SVGD}. 

\section{Diffusion map and kernel construction} \label{sec:DM}
\subsection{Kernel approximation of \texorpdfstring{$\mathscr L$}{the generator}} \label{subsec:kernel_L}
It remains to see how to implement
\eqref{update_step} in the generative modeling setting. 
Given a finite collection of samples $\{z^i\}_{i = 1}^n \sim \pi$, our goal is to approximate  $\nabla\mathscr L^{-1} f(x)$. Diffusion maps \citep{diffusion_map1, diffusion_map2, diffusion_map3, diffusion_map_approx} provide a natural framework for approximating $\mathscr L$ using kernels. Consider the Gaussian kernel $K_\epsilon(x,y) = e^{-\frac{\left\Vert x - y \right\Vert^2}{2\epsilon}}$ under some normalization
\begin{align*}
    M_\epsilon(x,y):=\frac{K_\epsilon(x,y)}{\sqrt{\int K_\epsilon(x,y) d\pi(x)} \sqrt{\int K_\epsilon(x,y) d\pi(y)}}.
\end{align*}
We construct the following two kernels,
\begin{align*}
    P_{\epsilon}^f(x,y) := \frac{M_\epsilon(x,y)}{\int M_\epsilon(x,y) d\pi(x)},\\
    P_{\epsilon}^b(x,y) := \frac{M_\epsilon(x,y)}{\int M_\epsilon(x,y) d\pi(y)},
\end{align*}
by normalizing with respect to the first or second argument. Here $f$ and $b$ stand for `forward' and `backward', respectively. Their actions on a function $g$ are defined as 
\begin{align*}
    T^{f,b}_\epsilon g(\cdot) = \int P^{f,b}_\epsilon(\cdot,y) g(y) d\pi(y).
\end{align*}
We also define the associated operators
\begin{align*}
    L^{f,b}_\epsilon :=\frac{\mathrm{Id} - T^{f,b}_\epsilon }{\epsilon}.
\end{align*}
As studied in \citet{diffusion_map_approx}, both the forward and backward operators converge to the generator \rev{in an appropriate sense}, 
\begin{align*}
    \lim_{\epsilon \rightarrow 0} L^{f}_\epsilon = \lim_{\epsilon \rightarrow 0}L^{b}_\epsilon = \mathscr L.
\end{align*}
Combining the previous results, we have
\begin{align}
   \lim_{\epsilon \rightarrow 0}\frac{g(\cdot) - \int P^{f,b}_\epsilon(\cdot,y) g(y) d\pi(y) }{\epsilon} = \mathscr L g(\cdot). \label{approx_kernel}
\end{align}
Note that this approximation holds only when data lie on a compact manifold \citep{diffusion_map_approx,Hein_rate, Singer_rate}. In practice, however, this assumption can be relaxed. In addition, although $L^{f,b}_{\epsilon}$ converges to a symmetric operator in the continuum limit, neither $L^f_{\epsilon}$ nor $L^b_{\epsilon}$ is symmetric. However, they satisfy $P^f_{\epsilon} (x,y)= P^b_{\epsilon}(y,x)$ and $L^f_{\epsilon} = (L^b_{\epsilon})^\ast$. Therefore, one way to get a symmetric operator is to take the average of the two
\begin{align*}
    L_{\epsilon} = \frac 12 (L_{\epsilon}^f + L_{\epsilon}^b).
\end{align*}
$L_{\epsilon}$ inherits all the properties of the forward and the backward kernel, hence converging to $\mathscr L$ in the limit. We now consider a finite sample approximation of the operator $\mathscr L$. Given samples $\{z^i\}_{i = 1}^N\sim \pi$, the above construction can be approximated by samples, namely, by replacing the integral by its empirical average (see Appendix \ref{appendix:finite_sample_kernel}). We add another subscript $N$ to these kernels, i.e., $M_{\epsilon, N}, P^{f,b}_{\epsilon, N}, T^{f,b}_{\epsilon, N}$ and $L^{f,b}_{\epsilon, N}$, to denote their counterparts resulting from finite sample approximation, \rev{where
\begin{align*}
    M_{\epsilon,N}(x, y) &= \frac{K_\epsilon(x,y)}{\sqrt{\sum_{i = 1}^N K_\epsilon(z^i,y)} \sqrt{\sum_{i=1}^N K_\epsilon(x,z^i)}},\\
    P_{\epsilon, N}^f(x,y) &= \frac{M_{\epsilon,N}(x,y)}{\sum_{i=1}^N M_{\epsilon,N}(z^i,y) },\\
    P_{\epsilon, N}^b(x,y) &= \frac{M_{\epsilon,N}(x,y)}{\sum_{i=1}^N M_{\epsilon,N}(x,z^i)},
\end{align*}
and 
\begin{align}
    P_{\epsilon, N}(x,y) = \frac 12 \left(P_{\epsilon, N}^f(x,y) + P_{\epsilon, N}^b(x,y)\right). \label{finite_P_N}
\end{align}
}

\subsection{Spectral approximation of \texorpdfstring{$\nabla \mathscr L^{-1} f(x)$}{the gradient of the inverse of the generator}}

We now exploit the spectral properties of the kernel. Recall that the operator $\mathscr L$ admits the following kernel approximation
\begin{align*}
    \mathscr L = \lim_{\epsilon \rightarrow 0} \frac{  \mathrm{Id} - T_\epsilon}{\epsilon},
\end{align*}
where $T_\epsilon g(\cdot) = \int P_\epsilon(\cdot, y) g(y) \pi(y)$.
\rev{Note that  $P_\epsilon = \frac 12 (P_\epsilon^f + P_\epsilon^b)$ is a symmetric positive definite kernel (see Appendix~\ref{pd_kernel} for a proof).} The classic tool for studying such a kernel is Mercer's theorem.
\begin{theorem}[Mercer]\label{mercer}
    Let $T_\epsilon$ and $P_\epsilon(x,y)$ be defined as in the previous discussion. Then there is a sequence of non-negative eigenvalues $\{\lambda_i\}_{i \in \mathbb N}$ and an orthonormal basis of eigenfunctions $\{\phi_i\}_{i \in \mathbb N}$ of $T_\epsilon$, i.e.,
    \begin{align*}
        & \int P_\epsilon(x,y) \phi_m(y) d\pi(y) = \lambda_m \phi_m(x),\\
        & \int \phi_m(x) \phi_n(x) d\pi(y) = \delta_{m,n},
    \end{align*}
    such that 
    \begin{align}\label{eigen_P}
        P_\epsilon(x,y) = \sum_{i = 1}^\infty \lambda_i \phi_i(x) \phi_i(y).
    \end{align}
\end{theorem}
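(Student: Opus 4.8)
The plan is to realize $T_\epsilon$ as a compact, self-adjoint, positive operator on $L^2(\pi)$, extract the eigensystem from the spectral theorem, and then recover the kernel expansion \eqref{eigen_P}; this is essentially the textbook proof of Mercer's theorem, specialized to our continuous kernel on the compact manifold $\mathcal D$.

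First I would verify the operator-theoretic setup. Because $\mathcal D$ is compact and $P_\epsilon$ is obtained from the Gaussian kernel $K_\epsilon$ through continuous normalizations whose denominators are bounded away from zero, $P_\epsilon$ is continuous on $\mathcal D \times \mathcal D$; hence it is bounded and square-integrable with respect to $\pi \otimes \pi$, so $T_\epsilon$ is a Hilbert--Schmidt, in particular compact, operator on $L^2(\pi)$. Symmetry of $P_\epsilon$ — immediate from $P_\epsilon = \tfrac12(P_\epsilon^f + P_\epsilon^b)$ and $P_\epsilon^f(x,y) = P_\epsilon^b(y,x)$ — makes $T_\epsilon$ self-adjoint. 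The spectral theorem for compact self-adjoint operators then furnishes an at most countable orthonormal system of eigenfunctions with real eigenvalues $\lambda_i \to 0$ spanning $(\ker T_\epsilon)^{\perp}$; appending an orthonormal basis of $\ker T_\epsilon$ (eigenvalue $0$) gives the orthonormal basis $\{\phi_i\}_{i\in\mathbb N}$ and the two displayed identities of the statement. Non-negativity of the $\lambda_i$ follows from positive definiteness of $P_\epsilon$: $\lambda_m = \langle T_\epsilon\phi_m,\phi_m\rangle = \iint P_\epsilon(x,y)\phi_m(x)\phi_m(y)\,d\pi(x)\,d\pi(y) \ge 0$.

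It then remains to establish \eqref{eigen_P}. In $L^2(\pi\otimes\pi)$ this is a one-line Parseval identity: $\{\phi_i\otimes\phi_j\}$ is an orthonormal basis of $L^2(\pi\otimes\pi)$ and the coefficient of $P_\epsilon$ against $\phi_i\otimes\phi_j$ is $\lambda_j\delta_{ij}$ by the eigenrelation, so $P_\epsilon = \sum_i\lambda_i\,\phi_i\otimes\phi_i$ in $L^2$. To upgrade this to the uniform, absolutely convergent pointwise expansion I would run the classical Dini argument. The residual $r_n(x,y) := P_\epsilon(x,y) - \sum_{i=1}^n\lambda_i\phi_i(x)\phi_i(y)$ is again a continuous positive semidefinite kernel — removing finitely many nonnegative spectral components of $P_\epsilon$ leaves a nonnegative operator, and the eigenfunctions for nonzero eigenvalues are continuous since $\phi_i = \lambda_i^{-1}T_\epsilon\phi_i$ with $T_\epsilon$ mapping $L^2(\pi)$ into $C(\mathcal D)$ — so $r_n(x,x)\ge 0$, i.e. $\sum_{i=1}^n\lambda_i\phi_i(x)^2 \le P_\epsilon(x,x)$ for every $x$ and $n$. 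The diagonal partial sums are then continuous, nondecreasing in $n$, and uniformly bounded, hence convergent, and one identifies the limit with the continuous function $x\mapsto P_\epsilon(x,x)$ (the $L^2$ convergence forces $\int\!\big(P_\epsilon(x,x) - \sum_{i\le n}\lambda_i\phi_i(x)^2\big)\,d\pi(x)\to 0$, which with monotonicity and nonnegativity pins down the limit), whence Dini's theorem and compactness of $\mathcal D$ give uniform convergence on the diagonal. Finally Cauchy--Schwarz, $\big|\sum_{i=m}^n\lambda_i\phi_i(x)\phi_i(y)\big| \le \big(\sum_{i=m}^n\lambda_i\phi_i(x)^2\big)^{1/2}\big(\sum_{i=m}^n\lambda_i\phi_i(y)^2\big)^{1/2}$, propagates uniform control from the diagonal to all of $\mathcal D\times\mathcal D$, and the uniform limit must coincide with the $L^2$ limit $P_\epsilon$.

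The main obstacle is precisely this last passage from $L^2$ to uniform convergence: the crux is justifying that the truncation $\sum_{i=1}^n\lambda_i\phi_i\otimes\phi_i$ captures exactly the top of the spectrum of $T_\epsilon$, so that $r_n$ is positive semidefinite and its diagonal nonnegative, and then controlling the monotone convergence of the diagonal sums carefully enough to invoke Dini — this is where continuity of $P_\epsilon$, secured by the Gaussian construction, enters in an essential way. Everything upstream is a routine application of Hilbert--Schmidt theory and the spectral theorem; if only the $L^2(\pi\otimes\pi)$ form of \eqref{eigen_P} is needed downstream, the Parseval computation already suffices and the Dini step can be omitted.
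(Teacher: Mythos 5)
The paper does not actually supply a proof of this statement: it is quoted as the classical Mercer theorem and used directly, with no proof environment following the theorem head. Your write-up is therefore a free-standing reconstruction rather than something to compare against an in-paper argument, and as such it is the standard textbook proof (compact self-adjoint positive integral operator on $L^2(\pi)$, spectral theorem, Parseval for the $L^2$ expansion, then Dini plus Cauchy--Schwarz to upgrade to uniform absolute convergence), carried out correctly.

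One hypothesis you inherit silently and should instead flag: positive semidefiniteness of $P_\epsilon$. The paper asserts ``$P_\epsilon = \tfrac12(P_\epsilon^f + P_\epsilon^b)$ is a symmetric positive definite kernel'' but does not prove it, and it is not automatic — $P_\epsilon$ is the symmetrization of a row-normalized PSD kernel, and $\tfrac12(A + A^\ast)$ need not be PSD even when $A = D^{-1}M$ with $M$ PSD and $D$ diagonal positive. Your Dini step hinges on $r_n(x,x) \geq 0$, which requires exactly this positivity; without it, only the $L^2(\pi\otimes\pi)$ form of \eqref{eigen_P} survives (which, for the paper's downstream spectral manipulations of $T_\epsilon$ and $L_\epsilon$, is in fact all that is used). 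So the proof is sound conditional on the paper's positivity claim, but that claim deserves an explicit citation or argument rather than being treated as given.
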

An immediate corollary we can see is the connection between the spectra of $T_\epsilon$ and $L_\epsilon$.
\begin{corollary}
    Let $T_\epsilon$ and $L_\epsilon$ be defined as in \rev{the} previous discussion, and denote the eigenvalues and eigenfunctions of $T_\epsilon$ by $\{\lambda_i, \phi_i\}$, with $1 = \lambda_0 \geq \lambda_1  \geq \lambda_2 \geq  \cdots$. Then the set of eigenvalues and eigenfunctions of $L_\epsilon$ is $\left \{ \frac{1-\lambda_i }{\epsilon}, \phi_i \right \}$. In particular $\mathscr L$ can be written as the limit of the integral operator,
    \begin{align*}
        \mathscr L f(x)= \lim_{\epsilon \rightarrow 0}\int K_{\mathscr L, \epsilon}(x,y) f(y) d\pi(y),
    \end{align*}
    where $K_{\mathscr L, \epsilon}(x,y) = \sum_{i = 1}^\infty \frac{1-\lambda_i}{\epsilon} \phi_i(x) \phi_i(y)$.
\end{corollary}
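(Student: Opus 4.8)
The plan is to diagonalize $L_\epsilon$ in the orthonormal eigenbasis $\{\phi_i\}$ supplied by Mercer's theorem (Theorem \ref{mercer}), read off the kernel of $L_\epsilon$ from the resulting spectral expansion, and then pass to the limit $\epsilon\to 0$ using the convergence $L_\epsilon\to\mathscr L$ already recalled.

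First I would verify the eigenpairs directly. Since $L_\epsilon=(\mathrm{Id}-T_\epsilon)/\epsilon$ and $T_\epsilon\phi_i=\lambda_i\phi_i$, applying $L_\epsilon$ to $\phi_i$ gives $L_\epsilon\phi_i=(\phi_i-\lambda_i\phi_i)/\epsilon=\tfrac{1-\lambda_i}{\epsilon}\phi_i$, so each $\phi_i$ is an eigenfunction of $L_\epsilon$ with eigenvalue $(1-\lambda_i)/\epsilon$. To see that this exhausts the spectrum, I would invoke completeness: Mercer's theorem guarantees that $\{\phi_i\}$ is an orthonormal basis of $L^2(\pi)$, so $L_\epsilon$ is diagonal in this basis and its set of eigenvalues is exactly $\{(1-\lambda_i)/\epsilon\}_i$. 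Because $\lambda_0=1$, the corresponding eigenvalue vanishes, which is why the series below may be started at $i=1$.

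Next I would build the kernel. Expanding an arbitrary $f\in L^2(\pi)$ as $f=\sum_i \langle f,\phi_i\rangle_\pi \phi_i$ (the $L^2(\pi)$ inner product) and using linearity and boundedness of $L_\epsilon$ yields
\begin{align*}
L_\epsilon f(x)=\sum_{i=1}^\infty \frac{1-\lambda_i}{\epsilon}\langle f,\phi_i\rangle_\pi\,\phi_i(x)=\int \left(\sum_{i=1}^\infty \frac{1-\lambda_i}{\epsilon}\phi_i(x)\phi_i(y)\right) f(y)\,d\pi(y),
\end{align*}
which identifies $K_{\mathscr L,\epsilon}(x,y)=\sum_{i=1}^\infty \tfrac{1-\lambda_i}{\epsilon}\phi_i(x)\phi_i(y)$ as the kernel of $L_\epsilon$. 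The interchange of sum and integral is the step that needs care: I would justify it first on the dense subspace spanned by finitely many $\phi_i$, where everything is a finite sum, and then pass to the general case using the Mercer-type convergence of the partial sums of $P_\epsilon$ together with the boundedness of $L_\epsilon$.

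Finally I would send $\epsilon\to 0$. By $\lim_{\epsilon\to 0}L_\epsilon=\mathscr L$ and \eqref{approx_kernel}, for suitable $f$ we obtain $\mathscr L f(x)=\lim_{\epsilon\to 0}L_\epsilon f(x)=\lim_{\epsilon\to 0}\int K_{\mathscr L,\epsilon}(x,y)f(y)\,d\pi(y)$, which is the asserted identity. I expect the only genuine obstacle to be making the kernel representation rigorous — controlling the ``$\mathrm{Id}$'' part, which is not literally an integral operator against $\pi$, and the convergence of the spectral series — whereas the eigenpair computation and the final limit are immediate from the results already established.
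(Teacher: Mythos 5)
Your proposal is correct and follows essentially the same route the paper has in mind: the paper presents this as an ``immediate'' corollary with no written proof, and the content of that immediacy is precisely your two observations, namely that $L_\epsilon\phi_i=\frac{1}{\epsilon}(\phi_i-T_\epsilon\phi_i)=\frac{1-\lambda_i}{\epsilon}\phi_i$ identifies the spectrum, and that expanding $f$ in the orthonormal basis $\{\phi_i\}$ and comparing with Mercer's expansion \eqref{eigen_P} produces the kernel $K_{\mathscr L,\epsilon}$, after which $\epsilon\to 0$ is the convergence $L_\epsilon\to\mathscr L$ already quoted. The caveat you flag at the end is the genuinely delicate point (and one the paper also glosses over): since $\lambda_i\to 0$, the coefficients $\frac{1-\lambda_i}{\epsilon}$ do not decay, so the series defining $K_{\mathscr L,\epsilon}$ does not converge pointwise and the ``$\mathrm{Id}$'' part of $L_\epsilon$ is not literally an integral operator against $\pi$; the representation should be read in the weak sense you describe, i.e.\ term-by-term against $f\in L^2(\pi)$, which is exactly how you justify the sum--integral interchange.
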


The interchange of the order between limit and the integral is guaranteed by the dominated convergence theorem. Using Mercer's theorem and its corollary, we can write $K_{\mathscr L, \epsilon}$ and its inverse using the eigendecomposition:
\begin{align*}
    K_{\mathscr L, \epsilon} (x,y) &= \sum_{i = 1}^\infty \frac{1 - \lambda_i}{\epsilon} \phi_i(x)\phi_i(y),\\
    K_{\mathscr{L}^{-1}, \epsilon} (x,y) &=
    \sum_{i = 1}^\infty \frac{\epsilon}{1 - \lambda_i} \phi_i(x)\phi_i(y).
\end{align*}

$\mathscr L^{-1}$ admits the following kernel expression,
\begin{align*}
    \mathscr L^{-1} f(x) = \lim_{\epsilon \rightarrow 0} \int K_{\mathscr{L}^{-1}, \epsilon} (x,y) f(y) d\pi(y).
\end{align*}
For the case where $\nabla_1 K_{\mathscr{L}, \epsilon}(x,y)$ exists, we have
\begin{align*}
    \nabla_1 K_{\mathscr{L}^{-1}, \epsilon} (x,y) = \sum_{i = 1}^\infty \frac{\epsilon}{1 - \lambda_i} \nabla \phi_i(x)\phi_i(y). 
\end{align*}
Therefore, 
\begin{align*}
    \nabla L^{-1}_\epsilon f(x) = \int  \nabla_1 K_{\mathscr{L}^{-1}, \epsilon} (x,y) f(y) d\pi(y).
\end{align*}
Under regularity assumptions (see Appendix \ref{regularity_assump}), $\lim_{\epsilon \rightarrow 0} \nabla L_\epsilon f(x) = \nabla \mathscr L f(x)$ and $\lim_{\epsilon \rightarrow 0} \nabla L^{-1}_\epsilon f(x)= \nabla \mathscr L^{-1} f(x)$.

\section{The generative model}\label{sec:generative model}
\subsection{Computing \texorpdfstring{$\nabla_1  K_{\mathscr L^{-1},\epsilon,N}(x,y)$}{the gradient of the kernel} for arbitrary points \texorpdfstring{$x,y$}{}}
We can now write the update step \eqref{update_step} using an $\epsilon$ kernel approximation
\begin{align*}
    x_{t+1}^{i} = x_{t}^i - \frac hM \sum_{j = 1}^M \nabla_1 K_{\mathscr{L}^{-1}, \epsilon} (x_{t}^i, x_{t}^j),
\end{align*}
or using its empirical counterparts,
\begin{align*}
    x_{t+1}^{i} = x_{t}^i - \frac hM \sum_{j = 1}^M \nabla_1 K_{\mathscr{L}^{-1}, \epsilon, N} (x_{t}^i, x_{t}^j).
\end{align*}
Note that the kernel $K_{\mathscr{L}^{-1}, \epsilon,N}$ is constructed only at the locations of the training samples $\{z^i\}_{i = 1}^N \sim \pi$. In order to obtain an implementable algorithm, we need to be able to compute $\nabla_1 K_{\mathscr{L}^{-1}, \epsilon,N}(\cdot,\cdot)$ for arbitrary points $x^*,y^*$. One way is to interpolate the eigenfunctions $\phi$ and their gradients $\nabla\phi$ at $x^*$ and $y^*$. However, this is restricted by the number of training samples for learning the kernel, as well as the interpolation method. To overcome this problem, we propose yet another natural way of computing $\nabla_1 K_{\mathscr{L}^{-1}, \epsilon,N}(\cdot,\cdot)$ by taking advantage of the eigendecomposition of the kernel, avoiding interpolation of eigenfunctions. We illustrate this idea using the $\epsilon$ kernel approximation $\nabla_1 K_{\mathscr{L}^{-1}, \epsilon}(\cdot,\cdot)$; its empirical counterpart follows directly. Set $\sigma_i = \frac{1 - \lambda_i}{\epsilon}$, and recall from \eqref{eigen_P} that $P_\epsilon(x,y) = \sum_{i = 1}^\infty \lambda_i \phi_i(x) \phi_i(y)$. Consider the following eigendecomposition of the kernel: 
\begin{align}
     &\mathcal \nabla_1 K_{\mathscr{L}^{-1}, \epsilon}(x^*,y^*) \nonumber\\
    = &\int_\mathcal Z \int_\mathcal W \left(\sum_{k = 1}^\infty \lambda_k \nabla \phi_k(x^*) \phi_k(w)\right)   
   \left(\sum_{j = 1}^\infty \lambda_j^{-1}\sigma_j^{-1} \lambda_j^{-1} \phi_j(w) \phi_j(z)\right)
    \left(\sum_{i = 1}^\infty \lambda_i\phi_i(z) \phi_i(y^*)\right) d\pi(w)\, d\pi(z) \nonumber\\
    = & \int_\mathcal Z \int_\mathcal W \nabla_1 P_\epsilon(x^*,w)  \left(\sum_{j = 1}^\infty \lambda_j^{-1}\sigma_j^{-1} \lambda_j^{-1} \phi_j(w) \phi_j(z)\right) 
    P_\epsilon (z,y^*) d\pi(w)\, d\pi(z) \label{kernel_int}.
\end{align}
where we have used the orthogonality of the eigenfunctions.

As noted previously, we use $\{z^i\}_{i = 1}^{N}$ to represent the training samples and use $\{x^i_t\}_{i = 1}^{M}$ to represent the generated samples at time $t$. Focusing on a single time step, we drop the dependence on $t$. Then the empirical approximation of \eqref{kernel_int} is as follows:
\begin{align}
     \nabla_1 K_{\mathscr{L}^{-1}, \epsilon,N}(x^i,x^j)
    = &\sum_{k_1 = 1}^{N}\sum_{k_2 = 1}^{N} \nabla_1 P_{\epsilon,N}(x^{i},z^{k_1})  \left(\sum_{k_3 = 1}^{N} \phi_{k_3}(z^{k_1}) \lambda_{k_3}^{-1}\sigma_{k_3}^{-1} \lambda_{k_3}^{-1}\phi_{k_3}(z^{k_2}) \right) P_{\epsilon,N}(z^{k_2},x^{j}). \label{grad_K_inv}
\end{align}
 In the matrix representation, the three matrices (from left to right) on the right-hand side above are of size $M \times N$, $N\times N$, and $N \times M$. The ingredients for computing the expression above are $P_{\epsilon,N}(\cdot,\cdot), \nabla_1 P_{\epsilon,N}(\cdot,\cdot), \lambda_i, \sigma_i, \phi_i$. Since $P_{\epsilon, N}$ is constructed using Gaussian kernels, its derivative with respect to the first argument $\nabla_1 P_{\epsilon,N}(\cdot,\cdot)$ can be computed in closed form (see Appendix~\ref{analytic_grad}). We therefore obtain an implementable algorithm.

\subsection{Algorithm for generative modeling}
\begin{algorithm}[tb]
   \caption{Diffusion map particle system (DMPS)}
   \label{alg:DMPS}
\begin{algorithmic}
    \STATE {\bfseries Input:} Training samples $\{z^i\}_{i = 1}^{N}\sim \pi$ and initial particles $\{x_0^i\}_{i = 1}^{M}$, tolerance $\mathrm{tol}$, bandwidth $\epsilon$, step size $h$
    \STATE {\bfseries Output:} $\{x_T^i\}_{i = 1}^{M}$
    \STATE Construct $P_{\epsilon, N}$ using $\{z^i\}_{i = 1}^{N}$ as in \eqref{finite_P_N}.
    \STATE Compute the eigenpairs $\{\lambda_i, \phi_i\}$ such that $P_{\epsilon,N}(x,y) = \sum_{i = 1}^N\lambda_i \phi_i(x) \phi_i(y)$, i.e., by performing singular value decomposition on the kernel matrix $P_{\epsilon,N}$.
    \WHILE{$\mathrm{tol}$ not met}
    \FOR{$i = 1,\cdots, M$}
    \STATE $x_{t+1}^{i} = x_{t}^i - \frac{h}{M} \sum_{j = 1}^{M} \nabla_1 K_{\mathscr{L}^{-1}, \epsilon, N} (x_{t}^i, x_{t}^j)$ as in \eqref{grad_K_inv}.
    \ENDFOR
    \ENDWHILE  
\end{algorithmic}
\end{algorithm}

Algorithm \ref{alg:DMPS} summarizes our proposed scheme, called a diffusion map particle system (DMPS).  We offer several comments. The classical analysis of diffusion maps requires the underlying distribution to have bounded support, but we find that this algorithm works well even when the support of $\pi$ is (in principle) unbounded. We suggest to initialize the samples $\{x_0^i\}_{i = 1}^{M}$ inside the support of $\pi$. Even though initializing samples outside the support would work because of the finite bandwidth $\epsilon$, starting the samples inside the support generally makes the algorithm more stable. In practice, we choose the bandwidth as $\epsilon = \mathrm{med}^2/(2\log N)$, following the heuristics proposed in \citet{svgd1}, where $\mathrm{med}$ denotes the median of the pairwise distances between training samples. \rev{A small $\epsilon$ results in particles not diffusing sufficiently, while a large $\epsilon$ causes particles to diffuse excessively; this heuristic seeks to balance such extremes. The tolerance terminating Algorithm \ref{alg:DMPS} is met if the Euclidean distance between successive positions of a particle, averaged over all particles, falls below a specified threshold for two consecutive iterations.}

\section{Convergence analysis}
We comment on the convergence rate of our scheme at the population level. From \eqref{conv_rate}, we see that if the kernel is exact, then the rate of change of the KL divergence is $-\mathbb E_\pi \left[ \frac{d\mu_t}{d\pi} \mathscr L \mathcal K_\pi \frac{d\mu_t}{d\pi}\right]$. If we replace $\mathcal K_\pi$ by its kernel approximation $L_{\epsilon}^{-1}$, then the resulting rate of change is
\begin{align*}
    \partial_t \DKL(\hat \mu_t||\pi) = -\mathbb E_\pi \left[ \frac{d\hat \mu_t}{d\pi} \mathscr L L_{\epsilon}^{-1} \frac{d\hat \mu_t}{d\pi}\right],
\end{align*}
where $\hat \mu_t$ is the distribution at time $t$ obtained from the following evolution
\begin{align*}
    \dot{x} = \int \nabla_1 K_{\mathscr L^{-1}, \epsilon}(x,y) d \hat  \mu_t(y).
\end{align*} 
Classical results from diffusion map literature \citep{Hein_rate, Singer_rate} reveal that the bias $\left| \mathscr L f(x)- L_\epsilon f(x)\right| \sim \mathcal O(\epsilon)$ if data lie on a compact manifold. Using the same assumptions, we state the following theorem. 
\begin{theorem}\label{thm:convergence}
    Suppose the target distribution $\pi$ is supported on a compact manifold. Let  $\mu_0$ be the initial distribution of the particles and $\hat \mu_t$ be the distribution of the generated process at time $t$, and assume that $\frac{d\hat \mu_t}{d\pi}$ \rev{exists} and is twice differentiable for all $t$. Then we have 
    \begin{align*}
    \rev{
        \DKL(\hat \mu_t||\pi) \leq \bigl (\mathcal O(\epsilon) + \DKL(\mu_0||\pi)\bigr ) e^{-t} + \mathcal O(\epsilon).
        }
    \end{align*}
\end{theorem}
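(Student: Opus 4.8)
The plan is to track $\DKL(\hat\mu_t||\pi)$ along the perturbed flow $\dot x = \int \nabla_1 K_{\mathscr L^{-1},\epsilon}(x,y)\,d\hat\mu_t(y)$, decompose its time derivative into the ``exact LAWGD'' contribution (handled exactly as in \eqref{conv_rate}--\eqref{chi_rate}) plus a remainder coming from replacing $\mathscr L^{-1}$ by its diffusion-map surrogate $L_\epsilon^{-1}$, bound that remainder by the classical $\mathcal O(\epsilon)$ diffusion-map bias, and close the estimate with Grönwall's inequality.

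Concretely, write $g_t \coloneqq \frac{d\hat\mu_t}{d\pi}$, so $\mathbb E_\pi[g_t] = 1$ and, by hypothesis, $g_t$ is finite and twice differentiable for every $t$. Starting from the displayed identity $\partial_t \DKL(\hat\mu_t||\pi) = -\mathbb E_\pi\!\left[g_t\,\mathscr L L_\epsilon^{-1} g_t\right]$, I would write $\mathscr L L_\epsilon^{-1} = \mathrm{Id} + (\mathscr L - L_\epsilon)L_\epsilon^{-1}$ (on the orthogonal complement of the constants, where $L_\epsilon L_\epsilon^{-1} = \mathrm{Id}$; the constant component of $g_t$ is annihilated and contributes nothing), which gives
\begin{align*}
  \partial_t \DKL(\hat\mu_t||\pi) = -\mathbb E_\pi\!\left[g_t^2\right] - E_t, \qquad E_t \coloneqq \mathbb E_\pi\!\left[g_t\,(\mathscr L - L_\epsilon)L_\epsilon^{-1} g_t\right].
\end{align*}
The first term is the exact-kernel rate of \eqref{chi_rate}: $-\mathbb E_\pi[g_t^2] \le -\mathbb E_\pi[(g_t-1)^2] = -\chi^2(\hat\mu_t||\pi) \le -\DKL(\hat\mu_t||\pi)$, where the last step is $\chi^2 \ge \DKL$ (from $y\log y \le y^2 - y$ for $y \ge 0$). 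For $E_t$ I would invoke the classical diffusion-map bias bound quoted just before the theorem, $\big|\mathscr L f(x) - L_\epsilon f(x)\big| \le C(f)\,\epsilon$ for data on a compact manifold, applied with $f = L_\epsilon^{-1} g_t$; since $\mathbb E_\pi[|g_t|] = 1$ this yields $|E_t| \le C\,\epsilon$, the constant $C$ being controlled uniformly in $t$ through the standing regularity assumption on $g_t$ (which must be propagated to $L_\epsilon^{-1} g_t$, the argument of the bias estimate).

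Combining the two bounds gives $\partial_t \DKL(\hat\mu_t||\pi) \le -\DKL(\hat\mu_t||\pi) + \mathcal O(\epsilon)$; multiplying by the integrating factor $e^{t}$ and integrating on $[0,t]$ yields $\DKL(\hat\mu_t||\pi) \le \DKL(\mu_0||\pi)\,e^{-t} + \mathcal O(\epsilon)(1-e^{-t})$, which after absorbing the $\mathcal O(\epsilon)$ correction --- or, equivalently, bounding $|E_t| \le \mathcal O(\epsilon)\,\mathbb E_\pi[g_t^2]$ so as to replace the decay rate $1$ by $1-\mathcal O(\epsilon)$ --- gives the stated bound $\DKL(\hat\mu_t||\pi) \le \big(\mathcal O(\epsilon) + \DKL(\mu_0||\pi)\big)e^{-t}$.

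I expect the main obstacle to be the uniform-in-$t$ control of $E_t$: the diffusion-map bias constant $C(f)$ depends on derivatives of $f = L_\epsilon^{-1} g_t$ up to the order dictated by the order of $\mathscr L$, so one must propagate both finiteness and smoothness of $g_t$ --- hence norm bounds on $L_\epsilon^{-1} g_t$ --- along the evolution, which is exactly what the hypothesis ``$\frac{d\hat\mu_t}{d\pi}$ finite and twice differentiable for all $t$'' is meant to furnish; making this quantitative and uniform, together with justifying the $\epsilon \to 0$ interchange of limits and the pseudoinverse bookkeeping for $L_\epsilon^{-1}$ on $\{1\}^\perp$, is the delicate part. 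The remaining ingredients --- the operator split, the inequality $\chi^2 \ge \DKL$, and the Grönwall integration --- are routine.
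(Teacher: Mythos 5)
Your proposal follows essentially the same route as the paper's proof: start from the perturbed rate $\partial_t \DKL(\hat\mu_t\|\pi) = -\mathbb{E}_\pi[g_t\,\mathscr L L_\epsilon^{-1} g_t]$, isolate the exact-kernel contribution $-\mathbb{E}_\pi[g_t^2]$, bound the remainder by the $\mathcal O(\epsilon)$ diffusion-map bias, invoke $\chi^2 \ge \DKL$ (the content of Theorem~1 of Chewi et al.), and close with Gr\"onwall. The one genuine difference is algebraic: the paper writes $\mathscr L = L_\epsilon + \mathcal O(\epsilon)$ and inverts via a Neumann/binomial expansion to get $\mathscr L^{-1} = L_\epsilon^{-1} + \mathcal O(\epsilon)$, then substitutes back; you instead use the \emph{exact} identity $\mathscr L L_\epsilon^{-1} = \mathrm{Id} + (\mathscr L - L_\epsilon)L_\epsilon^{-1}$ on $\{1\}^\perp$, which is cleaner and avoids the informal ``factoring out'' step and the question of whether the Neumann series is justified uniformly. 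Both versions then quietly require the bias constant in $|\mathscr L f - L_\epsilon f| \lesssim \epsilon$ (with $f = L_\epsilon^{-1}g_t$) to be controlled uniformly in $t$, which you correctly flag as the delicate point and which the paper's hypothesis on $g_t$ is meant to supply. One further point worth noting: the direct Gr\"onwall integration of $\partial_t\DKL \le -\DKL + \mathcal O(\epsilon)$ gives $\DKL(\mu_0\|\pi)e^{-t} + \mathcal O(\epsilon)(1-e^{-t})$, exactly as you wrote, and this does \emph{not} literally reduce to the stated form $\bigl(\mathcal O(\epsilon) + \DKL(\mu_0\|\pi)\bigr)e^{-t}$, whose right side vanishes as $t\to\infty$ while the bias term should persist; the paper's own proof makes the same jump without comment. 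Your alternative route via $|E_t| \le \mathcal O(\epsilon)\,\mathbb{E}_\pi[g_t^2]$ would instead degrade the decay rate to $e^{-(1-\mathcal O(\epsilon))t}$, which is again not literally the stated bound. So your proposal matches the paper's argument step for step, improves its cleanest algebraic move, and correctly surfaces the one loose end in the final Gr\"onwall step that the paper also leaves unresolved.
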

\begin{proof}
From \rev{the} diffusion map approximation, we have $\left| \mathscr L \frac{d\hat \mu_t}{d\pi}(x)- L_\epsilon \frac{d\hat \mu_t}{d\pi}(x)\right| \sim \mathcal O(\epsilon)$. Then obtain that, 
\begin{align*}
    \mathscr L \frac{d\hat \mu_t}{d\pi}(x) =\left(   L_\epsilon + \mathcal O(\epsilon)\right)\frac{d\hat \mu_t}{d\pi}(x)
\end{align*}
by factoring out $\frac{d\hat \mu_t}{d\pi}(x)$. Then using Neumann series or binomial expansion
\begin{align*}
     \mathscr L^{-1} \frac{d\hat \mu_t}{d\pi}(x) &=\left( L_\epsilon + \mathcal O(\epsilon) \right)^{-1}\frac{d\hat \mu_t}{d\pi}(x) \\
     &= \left(  L_\epsilon^{-1} - \mathcal O(\epsilon) L_\epsilon^{-2} +  \mathcal O(\epsilon)^2\right)\frac{d\hat \mu_t}{d\pi}(x) \\
     &= \left( L_\epsilon^{-1} + \mathcal O(\epsilon)  \right)\frac{d\hat \mu_t}{d\pi}(x).
\end{align*}
This follows from the fact that the inverse $L_\epsilon^{-1}$ is bounded. We then have 
\begin{align*}
    \mathscr L^{-1} = L_\epsilon^{-1}  + \mathcal O(\epsilon).
\end{align*}
We can then write 
\begin{align*}
    &\mathbb E_\pi \left[ \frac{d\hat \mu_t}{d\pi} \mathscr L L^{-1}_\epsilon \frac{d\hat \mu_t}{d\pi}\right]\\
    = &\mathbb E_\pi \left[ \frac{d\hat \mu_t}{d\pi} \mathscr L \left(\mathscr L^{-1} \frac{d\hat \mu_t}{d\pi} + \mathcal O(\epsilon) \right)\right]\\
    = &\mathbb E_\pi \left[ \left(\frac{d\hat \mu_t}{d\pi}-1 \right)^2\right] + \mathcal O(\epsilon),
\end{align*}
and consequently by \eqref{chi_rate},
\begin{align*}
     \partial_t \DKL(\hat \mu_t||\pi) =  -\chi^2(\hat \mu_t|| \pi) + \mathcal O (\epsilon).
\end{align*}
Then using the results (Theorem 1) in \citet{Chewi2020SVGD} and Gronwall's inequality, we have

\begin{align*}
\rev{
    \DKL(\hat \mu_t||\pi) \leq \left(\mathcal O(\epsilon) + \DKL(\mu_0||\pi)\right) e^{-t} + \mathcal O(\epsilon).
    }
\end{align*}
\end{proof}

\rev{We see that for a fixed $\epsilon$, as $t\rightarrow \infty$, the KL divergence between the target distribution and the initial distribution vanishes exponentially, and the error is dominated by the error arising from the diffusion map approximation.}

\section{Related work}\label{sec:relatedwork}

Several recent studies have explicitly considered the problem of generative modeling on manifolds. \citet{RSGMBortoli} construct a score-based generative model on a Riemannian manifold by coupling an Euler–Maruyama discretization on the tangent space with the exponential map to move along geodesics. Unlike the present method, however, this approach requires the underlying manifold structure to be prescribed. 
\citet{SGMFlows} use ideas from GANs, VAEs, and normalizing flows to learn the manifold and the density on that manifold simultaneously;  \citet{reg_flow_Caterini} propose a ``rectangular'' normalizing flow for for a similar setting, where the underlying manifold is unknown. These neural network based methods are quite complex and require significant tuning, with performance depending on the richness of the approximation families (e.g., width and depth of the networks, architectural choices in the normalizing flows). Our method is comparatively much simpler and nonparametric: the expressivity of the approximation grows with the sample size, and the only parameter that requires tuning is the kernel bandwidth. The role of manifold structure in score-based generative modeling has also been explored recently by \citet{jakiwSGM}, who shows that if the learned score is sufficiently accurate, samples generated by the SGM lie exactly on the underlying manifold. Our numerical experiments below will show that SGMs can indeed successfully identify the underlying manifold structure, but are comparatively more expensive and less accurate. 

\rev{A similar idea has been used in~\citet{gottwald2024stablegenerativemodelingusing}, where diffusion maps are employed to approximate the score function. While both approaches utilize diffusion maps, the key difference is that~\citet{gottwald2024stablegenerativemodelingusing} uses diffusion maps specifically to approximate the score function, and then adopts Langevin dynamics for \textit{stochastic} generative modeling. In contrast, our approach directly uses the diffusion map approximation of the generator of the Langevin diffusion process, without specializing this approximation to the score; this approach enables the use of LAWGD and results in \textit{deterministic} generative modeling.}

\section{Numerical experiments} \label{sec:num_res}
In this section, we study four numerical examples, exploring the performance of the algorithm on connected and disconnected domains and on manifolds. To benchmark the performance of DMPS, we compare it with: (i) SVGD and (ii) ULA, where the score $\nabla V$ required by both algorithms is replaced by its empirical approximation using the diffusion map, as well as with (iii) the score-based generative model (SGM) of \citet{diffusion2}. We implement SGM using a lightweight notebook from \citet{jakiw_blog,jakiwSGM}. 
To make (i) and (ii) more precise: recall that the a diffusion map can be used as a tool for approximating the Langevin generator from samples. That is, $\mathscr L f  = \nabla^2 f - \langle \nabla V, \nabla f \rangle$. Then note that by letting $f $ be the identity, i.e., $f(x) = x$, we have that $\mathscr L (x) = \nabla \log \pi$. Therefore, we can use samples to approximate the gradient of the potential. 
For DMPS and SVGD, we run the algorithm until a prescribed tolerance is met and we run ULA and SGM for a fixed number of iterations. 

To evaluate the quality of samples generated with each method, we compute the regularized optimal transport (OT) distance between the generated samples and reference samples from the target distribution. We compute this distance using the
Sinkhorn--Knopp algorithm \citep{sinkhorn, sinkhorn_knopp}. The cost matrix is set to be the pairwise distance between the reference samples and generated particles, and each sample is assigned equal weight marginally. The number of reference samples is chosen to be large to mitigate error in the OT distance resulting from discretization of the target: $20000$ for the first three examples and $50000$ for the last one due to its higher dimension. The entropic regularization penalty $1/\lambda$ is set to be $O(10^{-2})$ in Mickey mouse, two moons, the arc, and hyper-semisphere examples, and $O(10^{-3})$ in the high energy physics example. 
For Mickey mouse, two moons, the arc, and hyper-semisphere examples, each experiment is repeated $10$ times for reproducibility; this replication involves sampling new training data and repeating all steps of each algorithm. For the high energy physics example, the experiment is conducted for $30$ different physical particles.
For the the Mickey mouse, two moons, the arc, the high energy physics examples, the number of generated particles is varied over $\{100, 300, 900, 2700$\}; for the hyper-semisphere example, it is fixed to $300$. 

\subsection{Mickey mouse: two-dimensional connected domain}
In this example, the target distribution is uniform over a compactly supported Mickey mouse-shaped domain. The generative process is initiated uniformly inside a circle. Results are obtained with both $1000$ and $2000$ training samples. In Figure \ref{mickey_sample}, we show the initial particles, the generated particles and the target distribution. Both methods capture the shape relatively well. However, particles generated from SVGD move out of the domain, while most of the particles generated using DMPS stay inside. In some cases, the SVGD-generated particles exhibit a non-uniform pattern; see Figure \ref{mickey_wierdsvgd}. Figure \ref{mickey_err} shows quantitative comparisons of the error. 
\begin{figure}
\centering  
\subfigure{\includegraphics[width=0.48\linewidth]{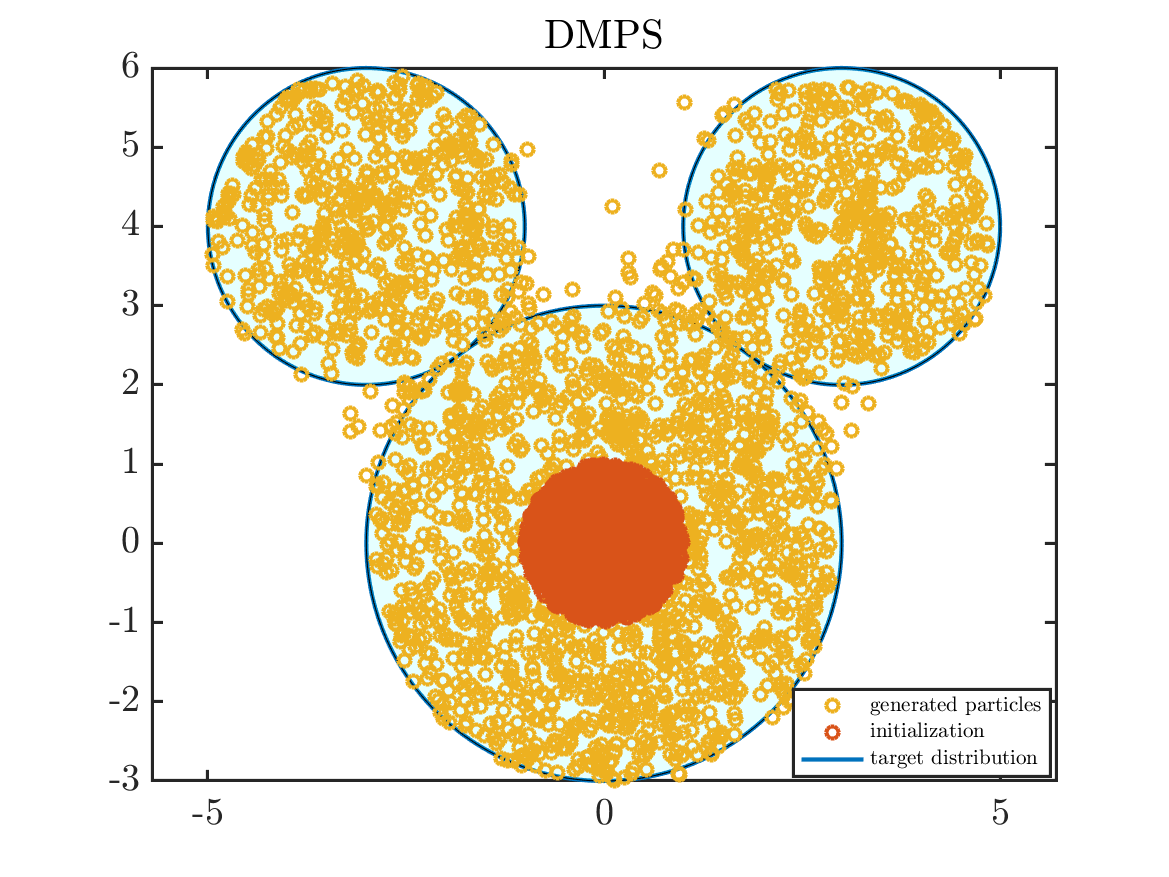}}
\subfigure{\includegraphics[width=0.48\linewidth]{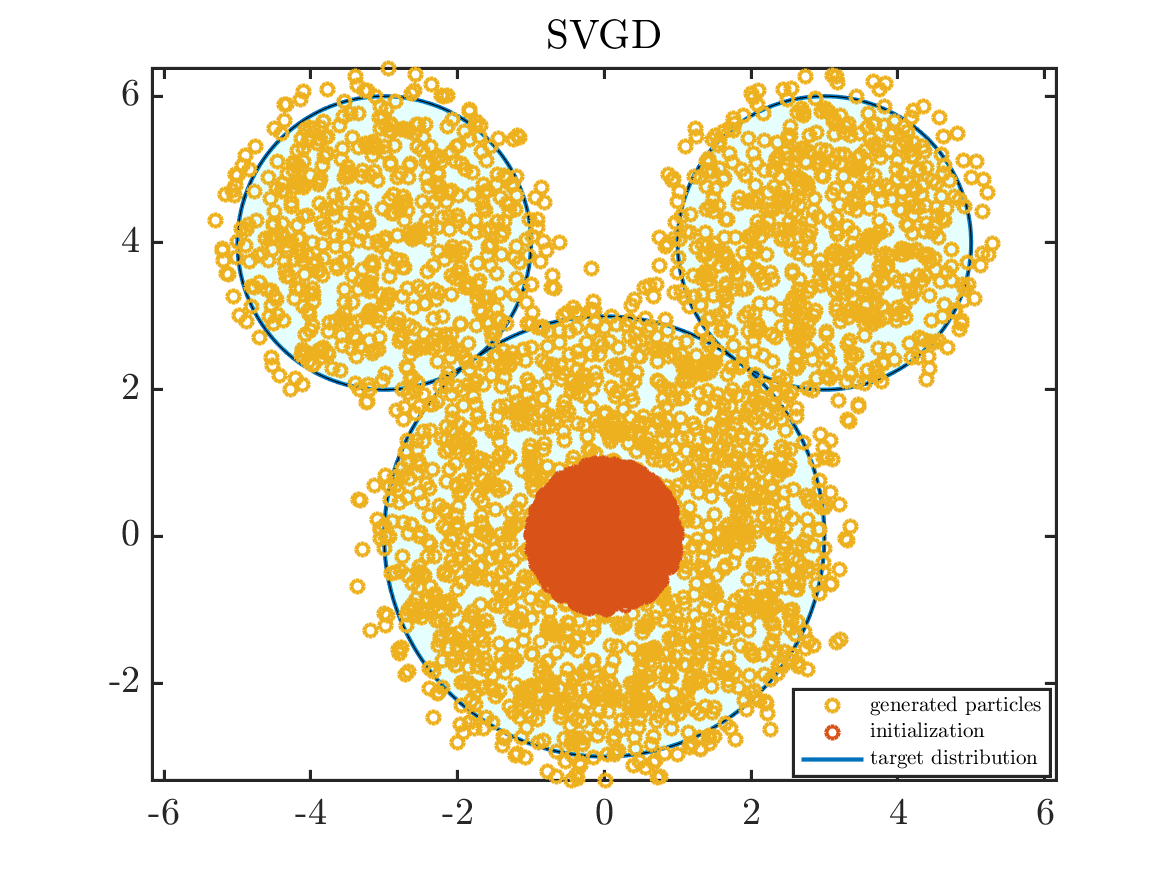}}
\caption{Mickey mouse: 2700 generated particles using DPMS and SVGD, with 2000 training samples}
\label{mickey_sample}
\end{figure}

\begin{figure}
\begin{center}
      \includegraphics[width=0.48\linewidth]{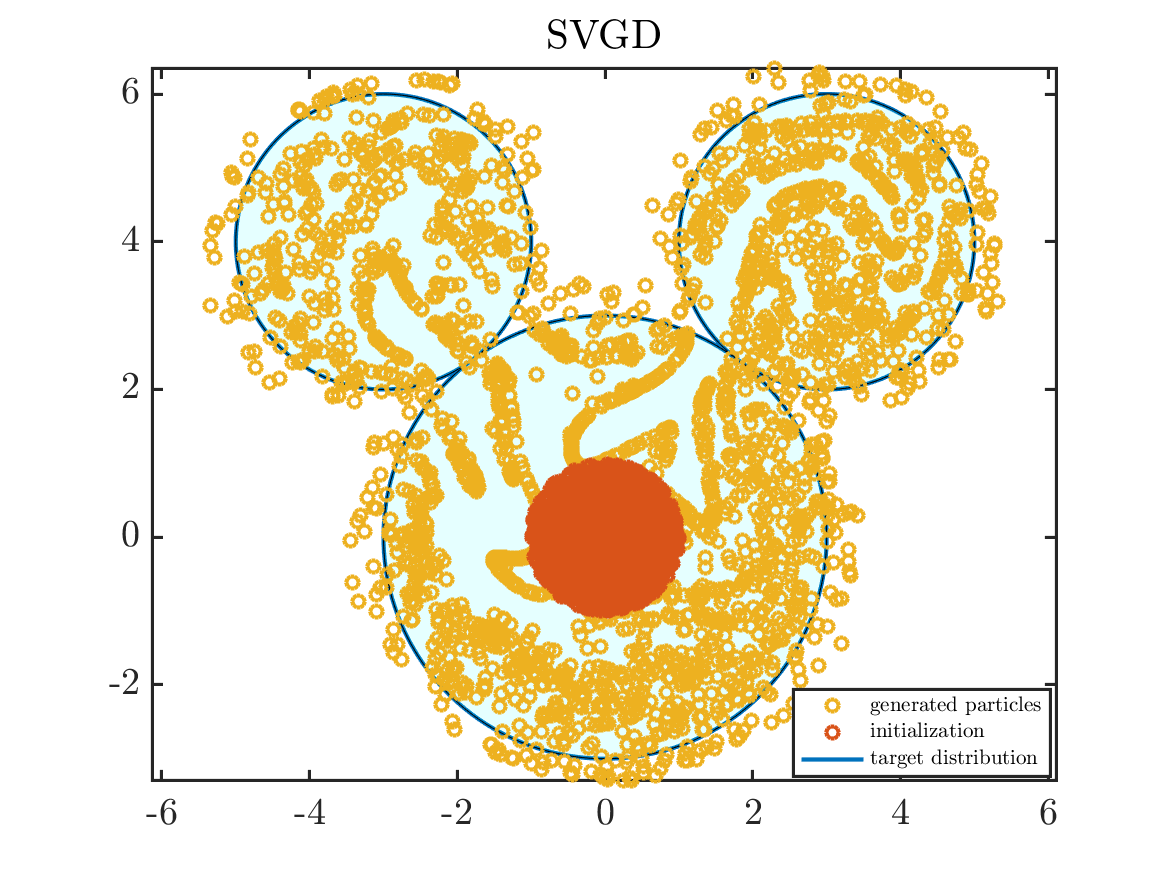}
      \caption{Mickey mouse: an instance of running the SVGD generative model shows strange non-uniform pattern with 1000 training samples and 2700 generated particles}
      \label{mickey_wierdsvgd}
\end{center}
\end{figure}

\begin{figure}
\begin{center}
      \includegraphics[height=0.5\linewidth]{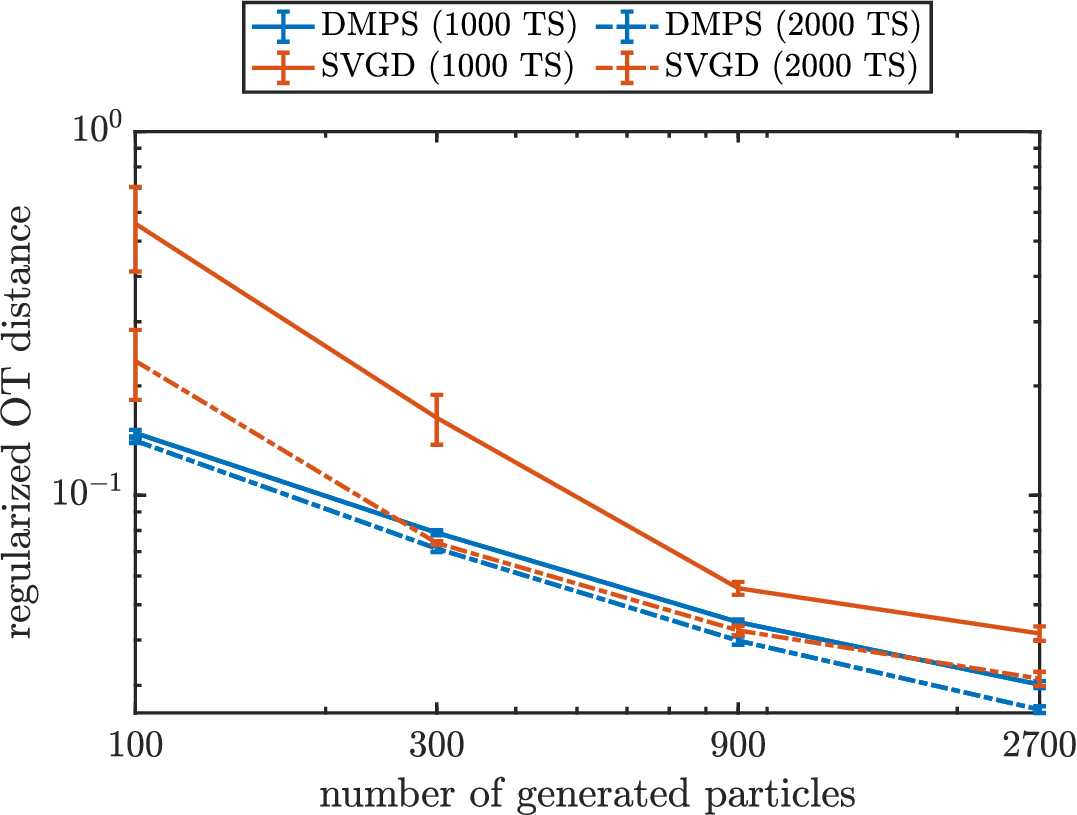}
      \caption{Mickey mouse: error comparison between DMPS, SVGD}
      \label{mickey_err}
\end{center}
\end{figure}

\subsection{Two moons: two-dimensional disconnected domain}
In this example, the target distribution is uniform and compactly supported on a two-moon-shaped domain. In contrast with the previous example, the domain is disconnected. Though the underlying distribution has zero density outside the support, the finite kernel bandwidth enables the methods to be implementable in this case. 
Results are obtained with $500$ and $1000$ training samples. We show the initial particles, target distribution, and particles generated with DMPS, SVGD, and ULA in Figure \ref{twomoons_sample} and the regularized OT distance in Figure \ref{twomoons_err}. As we can see in Figure \ref{twomoons_sample}, SVGD does not explore the very end of the domain and ULA has  mny samples that diffuse out of the support. The error plot (Figure \ref{twomoons_err}) shows that DMPS enjoys the smallest error in terms of OT distance, and that this the error decreases with more generated particles. While ULA shows a similar convergence (with larger values of error), the error of SVGD fluctuates as more particles are included.

\begin{figure}[!t]
\centering
\begin{tabular}{ccc}
\resizebox{0.33\textwidth}{!}{\includegraphics{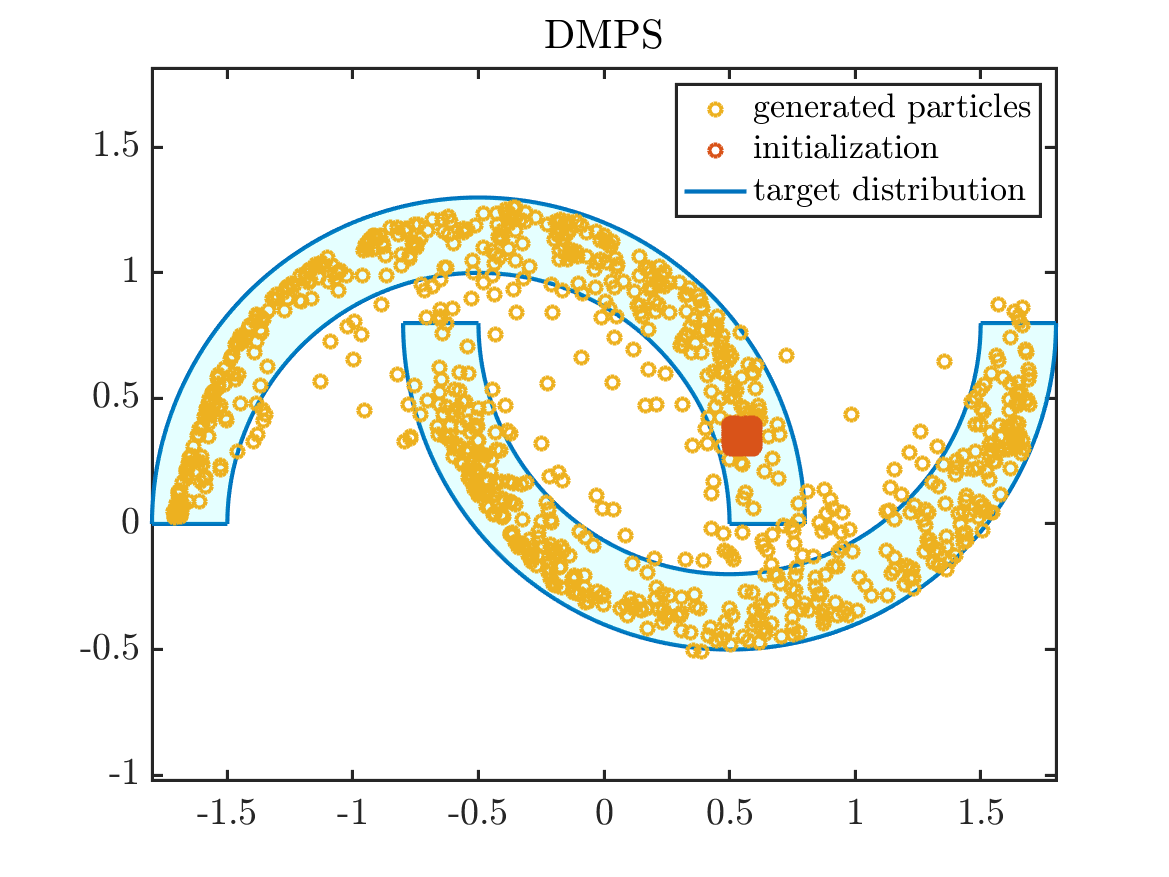}}
&
\hspace*{-0.7cm}\resizebox{0.33\textwidth}{!}{\includegraphics{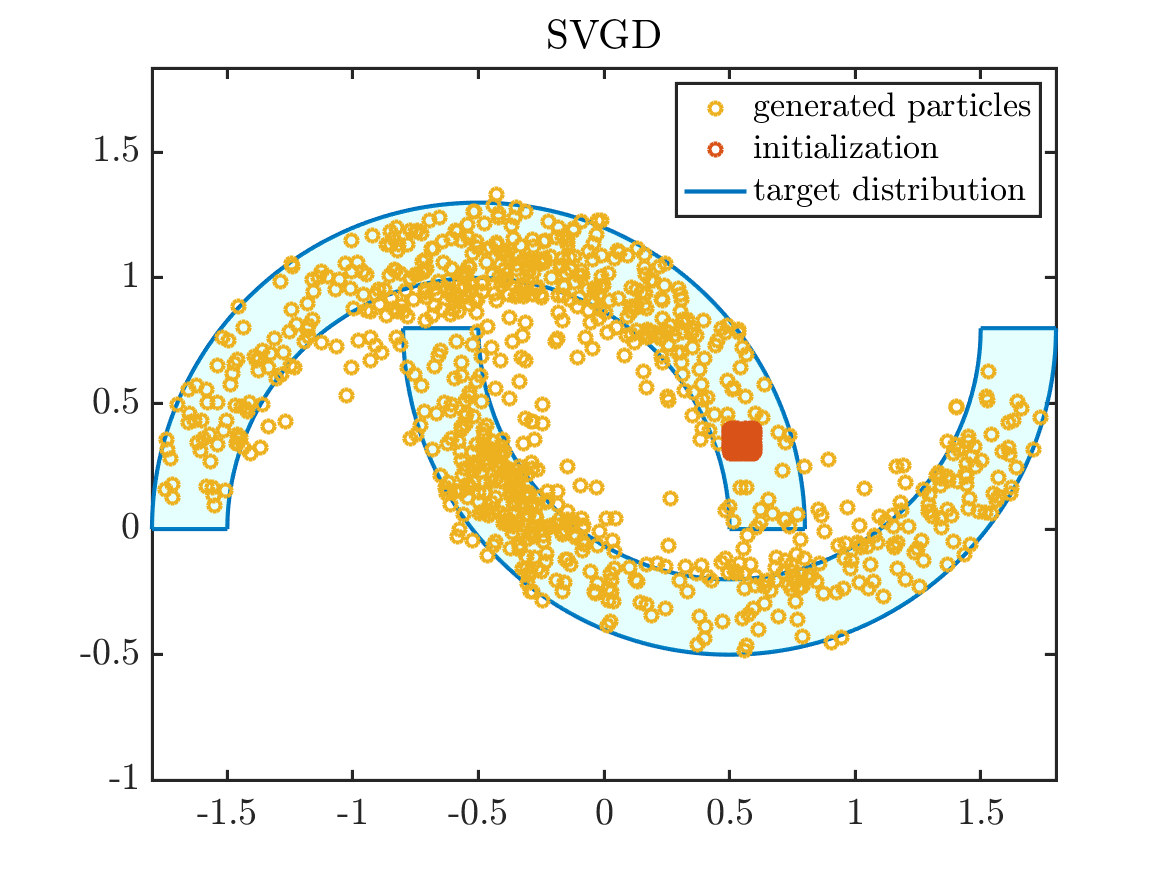}}
&
\hspace*{-0.7cm}\resizebox{0.33\textwidth}{!}{\includegraphics{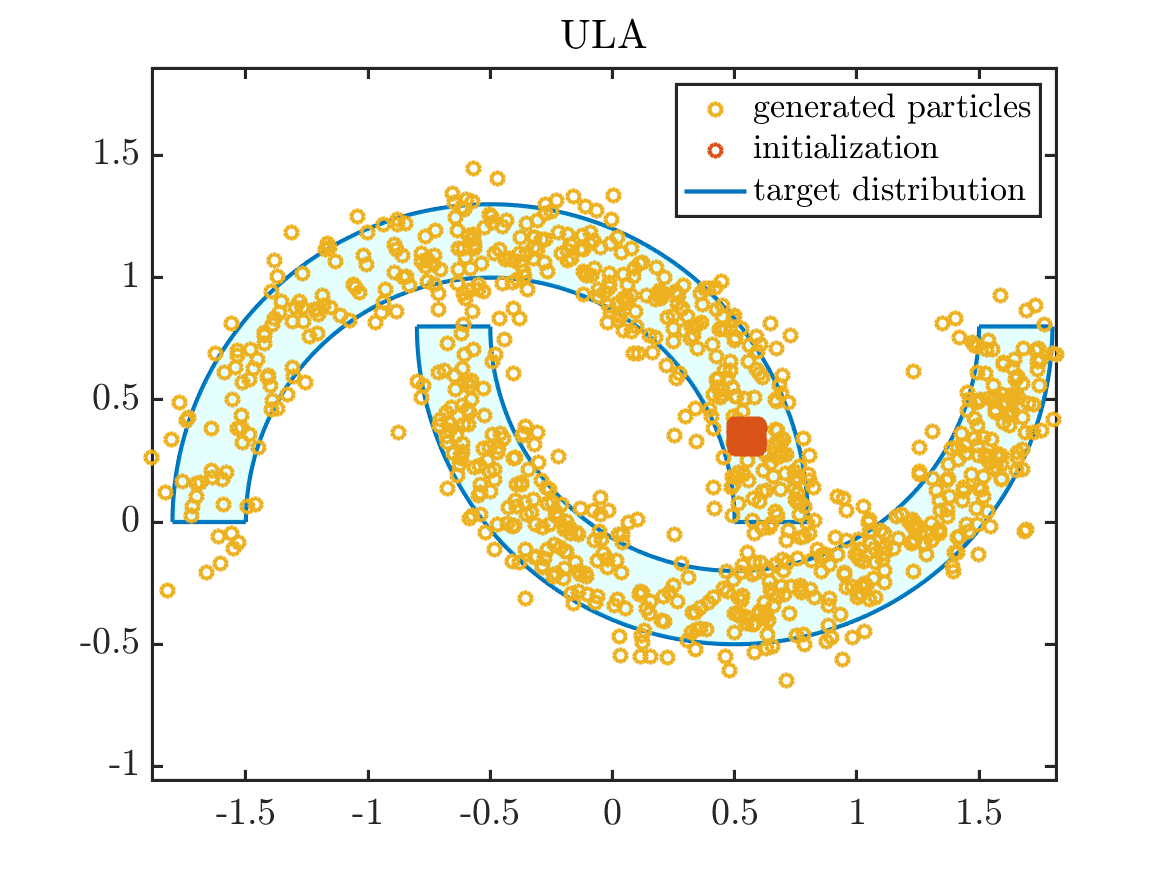}}\\
\resizebox{0.33\textwidth}{!}{\includegraphics{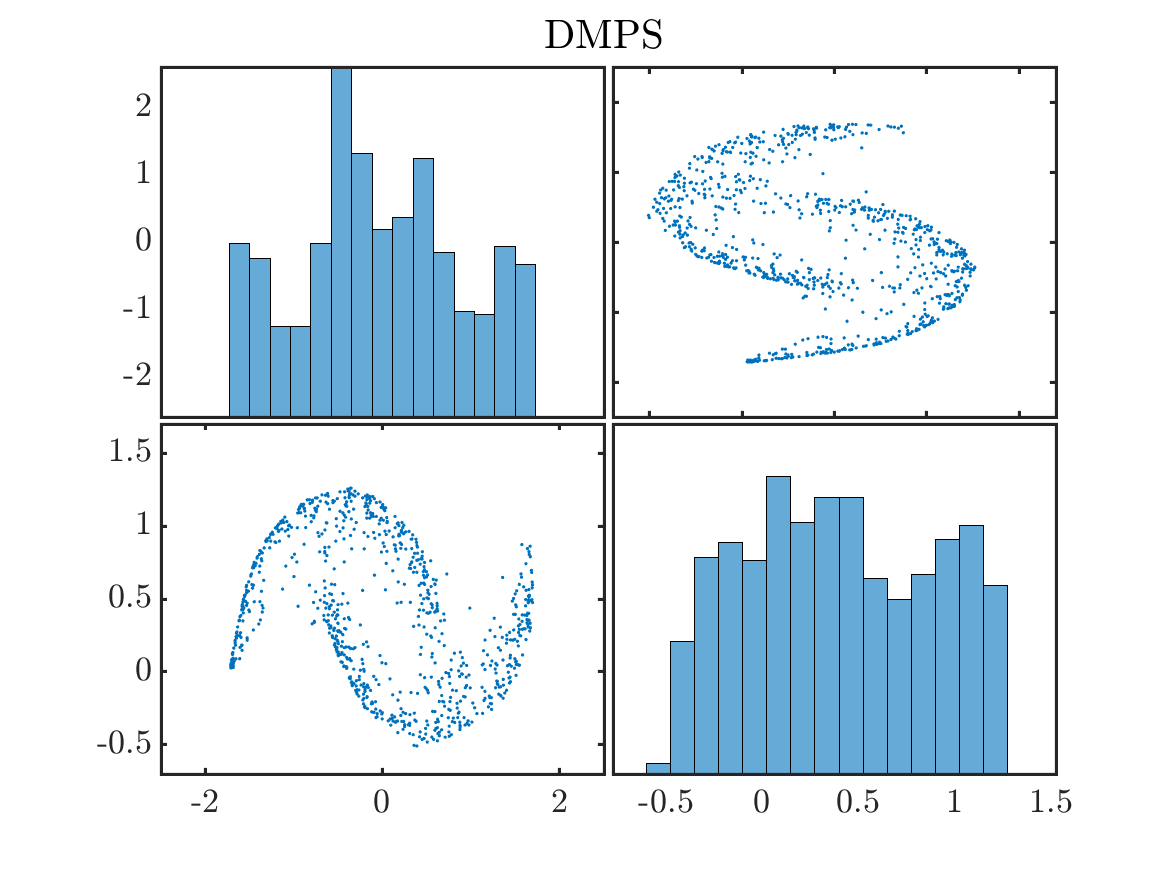}}
&
\hspace*{-0.7cm}\resizebox{0.33\textwidth}{!}{\includegraphics{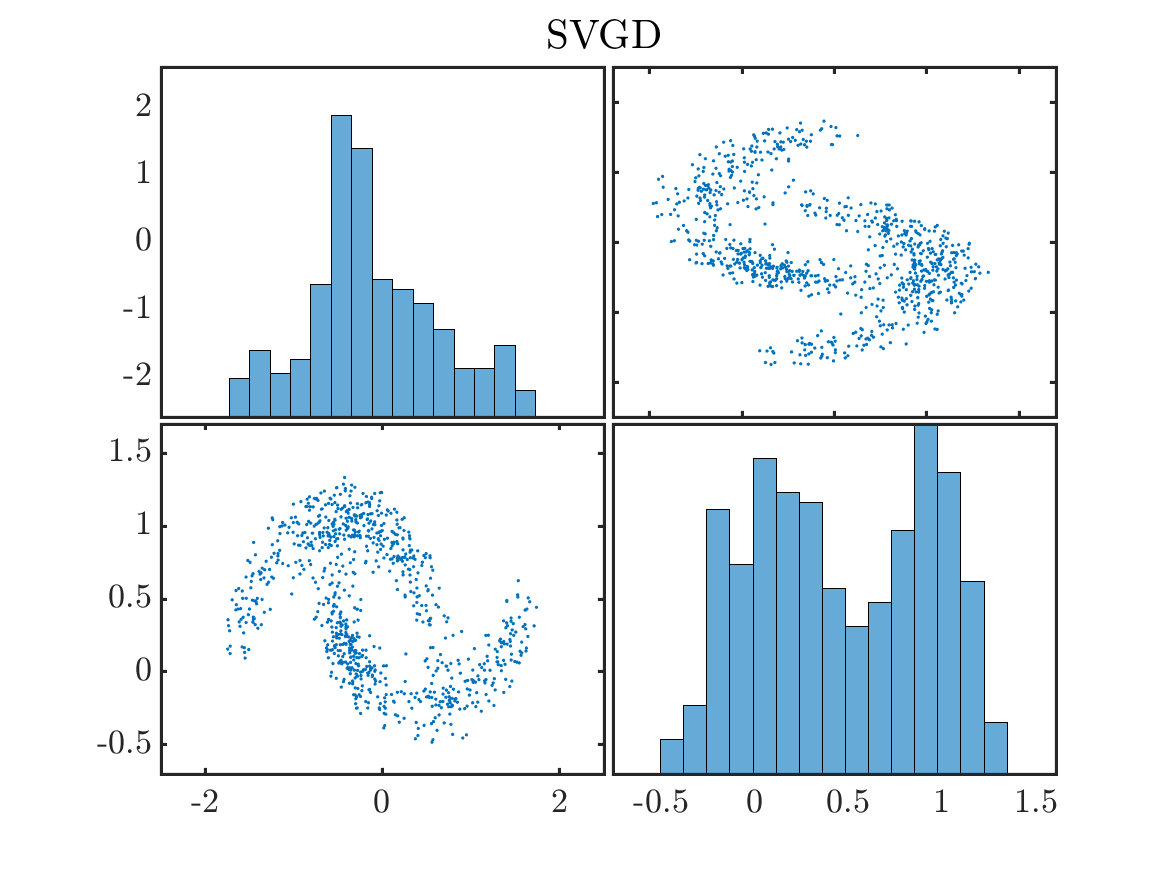}}
&
\hspace*{-0.7cm}\resizebox{0.33\textwidth}{!}{\includegraphics{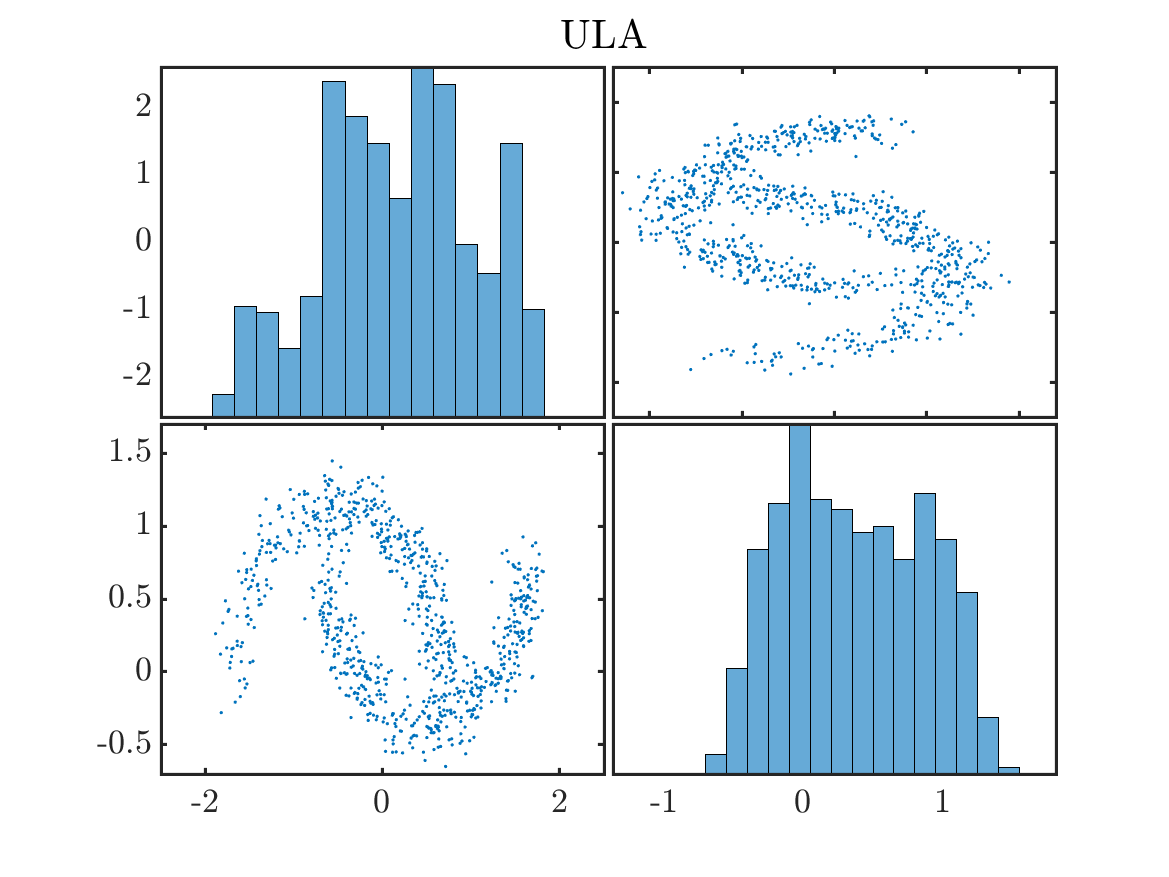}}\\
\end{tabular}
\caption{Two moons: 900 generated particles from DMPS, SVGD, and ULA with 500 training samples}
\label{twomoons_sample}
\end{figure}

\begin{figure}
\begin{center}
      \includegraphics[height=0.5\linewidth]{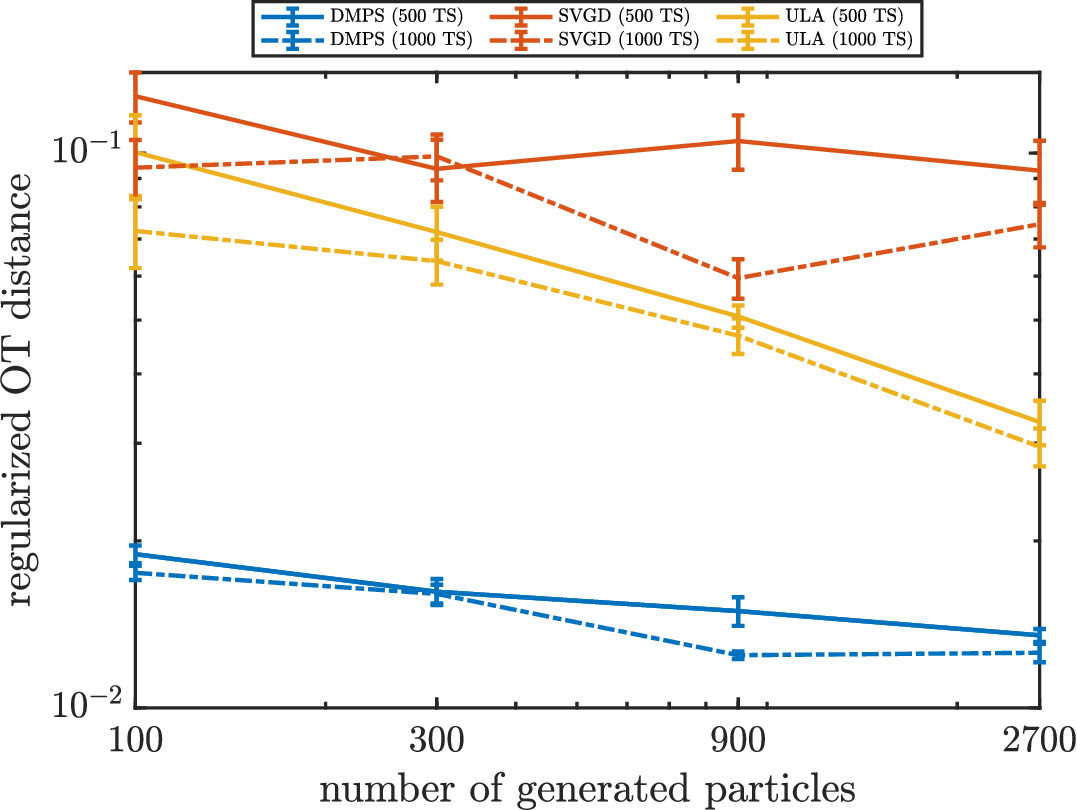}
      \caption{Two moons: error comparison between DMPS, SVGD, and ULA. Solid lines use 500 training samples, dashed lines use 1000.}
      \label{twomoons_err}
\end{center}
\end{figure}

\subsection{The arc: one-dimensional manifold embedded in a three-dimensional space}
We now consider an example where the data lie on a manifold, in this case an arc of radius $1$ embedded in $\mathbb{R}^3$. Training data are drawn uniformly from the arc and perturbed in the radial direction only, with $U(0,10^{-2})$ noise.
Results are obtained with both 100 and 1000 training samples. The initial particles and the target distribution are shown in Figure \ref{3d_rotation_sample} (left). We then run DMPS, SVGD, ULA, and SGM for each batch of training and initialization samples, visualizing an instance in Figure \ref{3d_rotation_sample}. Particles generated by the two deterministic methods, DMPS and SVGD, lie only on the two-dimensional plane of the training data, but the particles generated using SVGD do not fully explore the target distribution. Particles generated by the two stochastic methods, ULA and SGM, span the full three-dimensional space due to the added noise. Errors are plotted in Figure \ref{3d_rotation_err}, for both choices of training set size. We see that DMPS exhibits the smallest errors, and that this error decreases as we increase the number of generated particles. SGM shows a similar trend, but with larger errors. The performance of ULA does not seem to improve after using more training samples, which might be due to finite discretization timestep (although it was chosen small relative to the width of the target, $h=5 \times 10^{-4}$).
SVGD gives the largest errors, which do not seem to decrease with more particles.

\begin{figure}\centering
\captionsetup[subfigure]{labelformat=empty}
\subfigure[Initialization and target distribution]{\label{}\includegraphics[width=.5\linewidth]{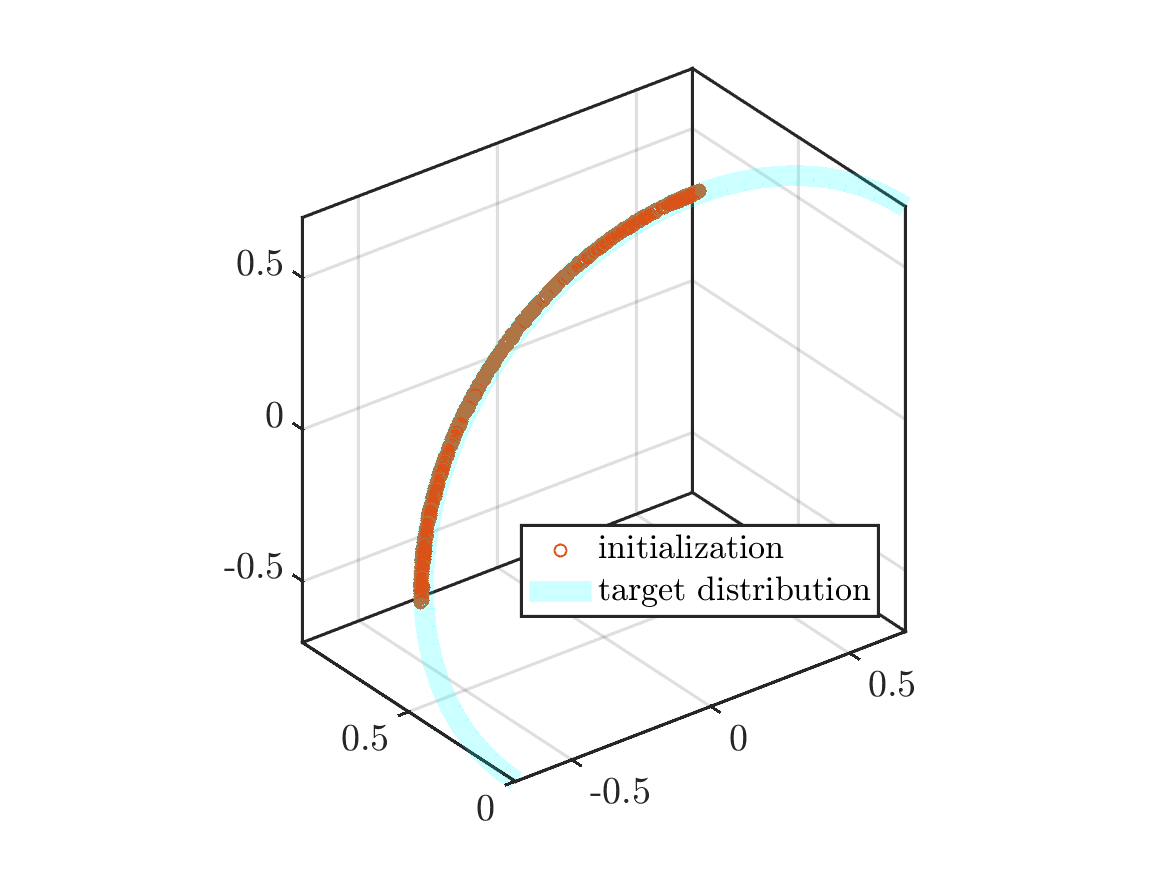}}\hfill
\subfigure[Generated particles]{\label{}\includegraphics[width=.5\linewidth]{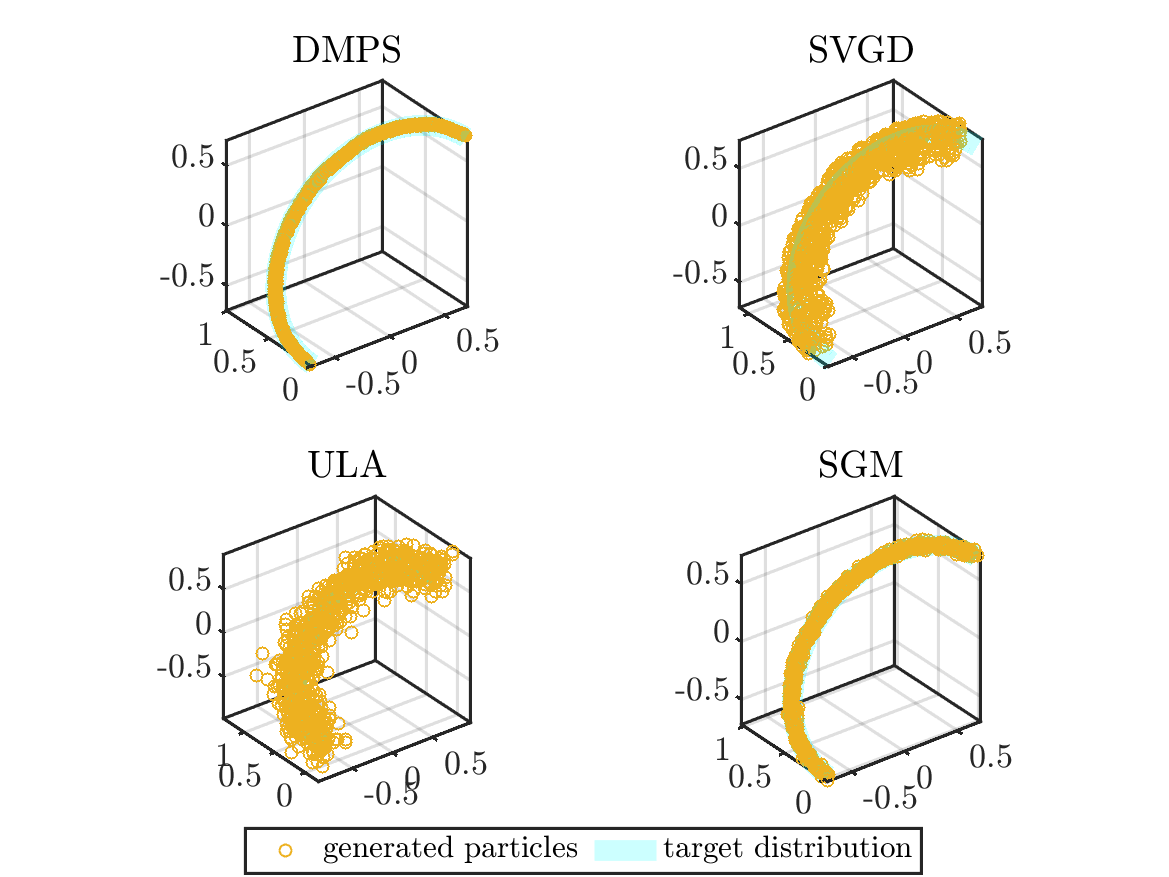}}\par 
\subfigure[Marginal distributions]{\label{}\includegraphics[width=1\linewidth]{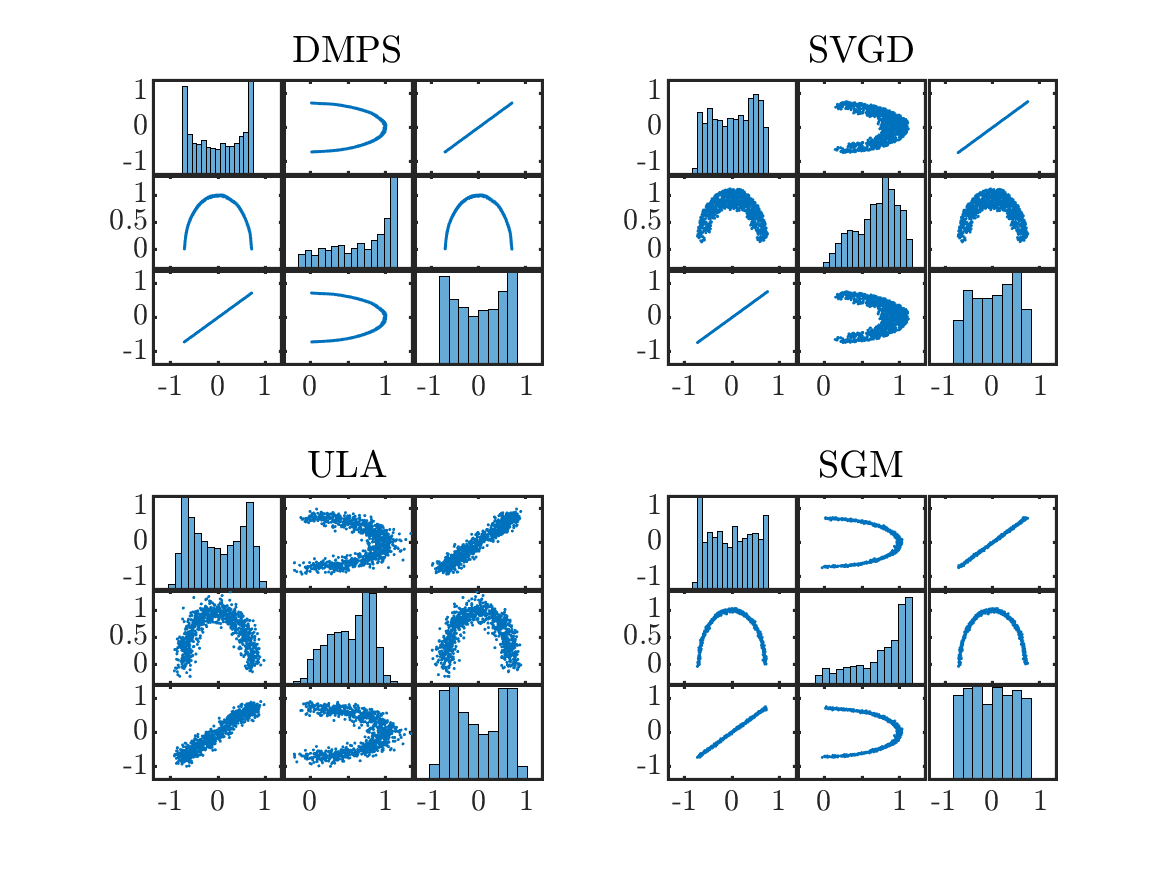}}
\caption{The arc: 900 generated particles using DMPS, SVGD, ULA, and SGM with 1000 training samples.}
\label{3d_rotation_sample}
\end{figure}

\begin{figure}
\begin{center}
      \includegraphics[height=0.5\linewidth]{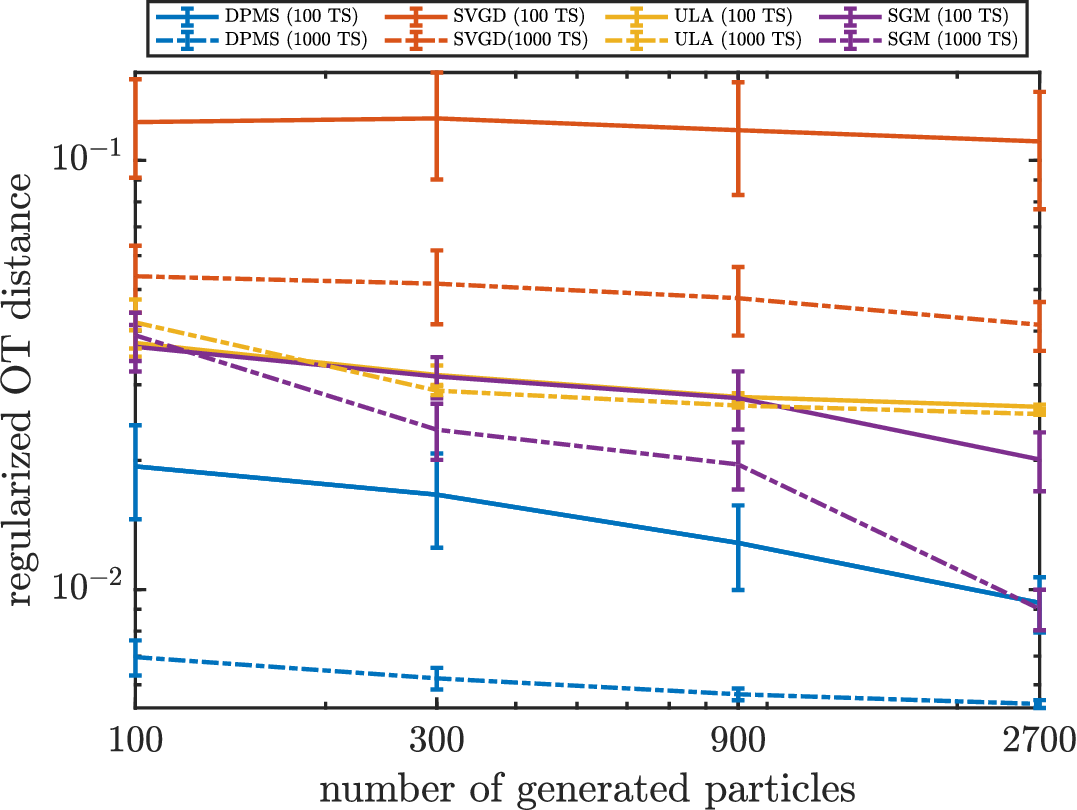}
      \caption{The arc: error comparison between DMPS, SVGD, ULA, and SGM. Solid lines use 100 training samples; dashed lines use 1000.}
      \label{3d_rotation_err}
\end{center}
\end{figure}

\subsection{Hyper-semisphere: 2 to 14-dimensional manifolds}\label{sec:semisphere}
We finally study an example where data are uniformly sampled on a half-sphere embedded in ambient dimensions $d \in \{3, 6, 9, 12, 15\}$; in each case, the manifold is thus of dimension $d-1$. 
For this problem, the number of training samples and the number of generated particles are fixed to $1000$ and $300$, respectively. A visualization for $d=3$ can be seen in Figure \ref{ndsphere_sample}. We show the error (in OT distance), and the standard error of the mean error over 10 trials, in Table \ref{err_table}. For all dimensions, DMPS enjoys the smallest error and the smallest standard error, followed by SGM and ULA, which also produce relatively small errors and stable results. SVGD has the largest error and does not produce stable results (large variability over the 10 trials). 
\begin{figure}
\begin{center}
      \includegraphics[height=0.5\linewidth]{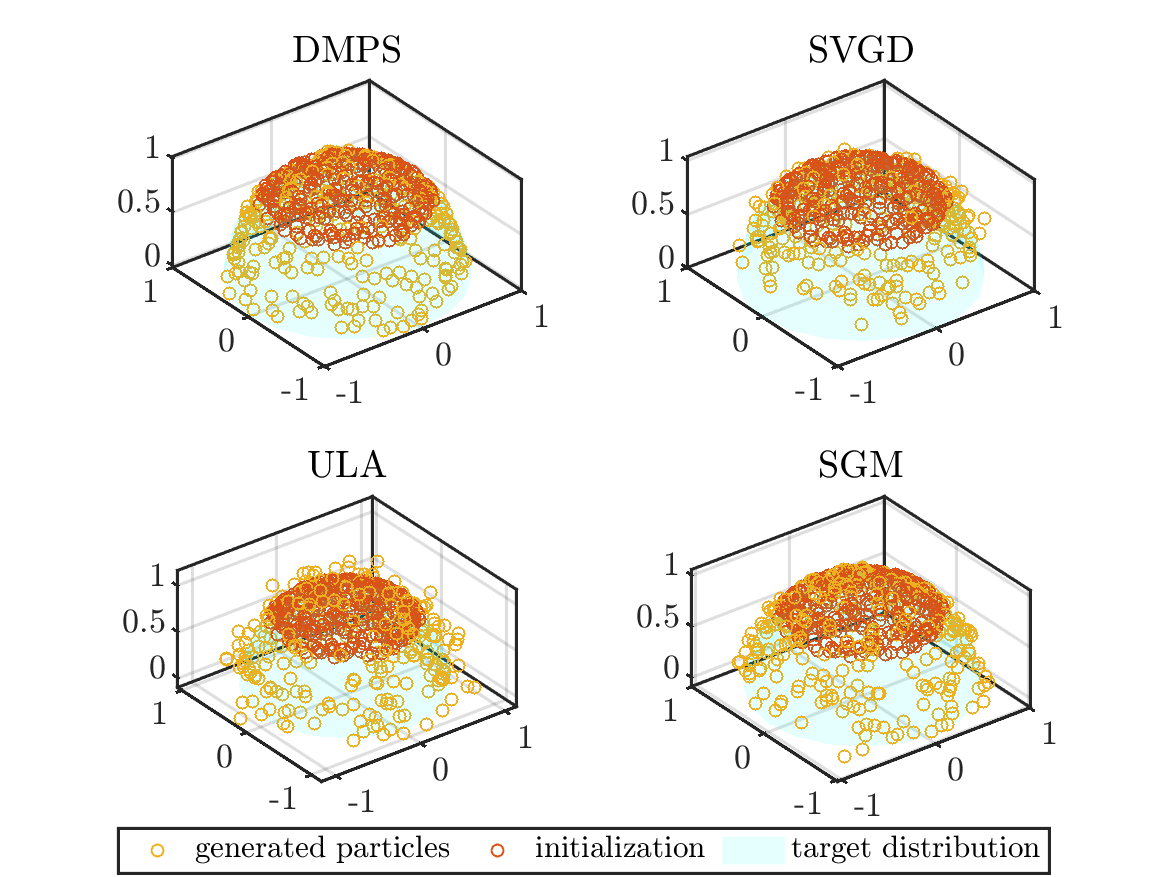}
      \caption{Hyper-semisphere: 300 generated samples using DMPS, SVGD, ULA and SGM in three dimensions with 1000 training samples.}
      \label{ndsphere_sample}
\end{center}
\end{figure}

\begin{table}
\centering
\begin{tabular}{ ccccc } 
\hline
& DMPS & SVGD & ULA &  SGM\\
\hline
$d$ = 3&\textbf{0.018} $\pm$ 0.0003&0.146 $\pm$ 0.0229&0.033 $\pm$ 0.0006&0.032 $\pm$ 0.0024\\
$d$ = 6&\textbf{0.142} $\pm$ 0.0003&0.267 $\pm$ 0.0171&0.185 $\pm$ 0.0012&0.170 $\pm$ 0.0022\\
$d$ = 9&\textbf{0.303} $\pm$ 0.0003&0.361 $\pm$ 0.0077&0.378 $\pm$ 0.0011&0.348 $\pm$ 0.0025\\
$d$ = 12&\textbf{0.441} $\pm$ 0.0002&0.811 $\pm$ 0.0783&0.555 $\pm$ 0.0018&0.496 $\pm$ 0.0041\\
$d$ = 15&\textbf{0.564} $\pm$ 0.0009&0.986 $\pm$ 0.0055&0.713 $\pm$ 0.0025&0.608 $\pm$ 0.0024\\
\hline
\end{tabular}
\caption{\label{err_table} Hyper-semisphere: error comparison ($\pm$ standard error) between DMPS, SVGD, ULA, and SGM.}
\end{table}

\subsection{High energy physics: gluon jet dataset}


We now study a real-world example from high energy physics, where the goal is to generate relative angular coordinates $\eta^{\textrm{rel}}, \phi^{\textrm{rel}}$ and relative transverse momenta $p_T^{\textrm{rel}}$ of elementary physical particles produced in a gluon jet. Details on the dataset are in \citet{kansal2021particle}. The dataset is of dimension $177252 \times 30 \times 4$, meaning that there are $177252$ jets, $30$ physical particles per jet and each particle is characterized by four distinct features: $\eta^{\textrm{rel}}$, $\phi^{\textrm{rel}}$, $p_T^{\textrm{rel}}$, and a binary mask. Since the value of the last feature is either $0$ or $1$, we only use the first three features for the propose of generative modeling. Therefore, for each physical particle, there are $177252$ available samples, and each sample is of $3$ dimensions. We then train a generative model for each of the $30$ physical particles. To train each model, we normalize the training data so that they have mean zero and marginal variances of one. We use $1000$ samples for training, which are drawn randomly from the full set of $177252$ samples, and initialize 100, 300, 900, or 2700 samples from $U(-1,1)^3$ for the generative process. We also compare SVGD, ULA and SGM with the same setup: 1000 training samples and increasing numbers of generated samples. Errors ($\pm$ standard error) for all four methods are shown in Table \ref{err_table_gluon}. These results are averaged over all $30$ different physical particles. We see that DMPS consistently outperforms the other methods across all cases in this problem.
In Figure~\ref{gluon_sample}, we also show the marginal distributions of 2700 generated samples (in red) using different methods and plot target samples (in blue) as a reference, for the first physical particle. As we can see, samples generated using DMPS, ULA, and SGM resemble those from the target distribution. Samples generated using SVGD, however, qualitatively fail to capture the target distribution, consistent with the larger errors in Table \ref{err_table_gluon}.

\begin{figure}
\centering 
\subfigure{\includegraphics[width=0.49\linewidth]{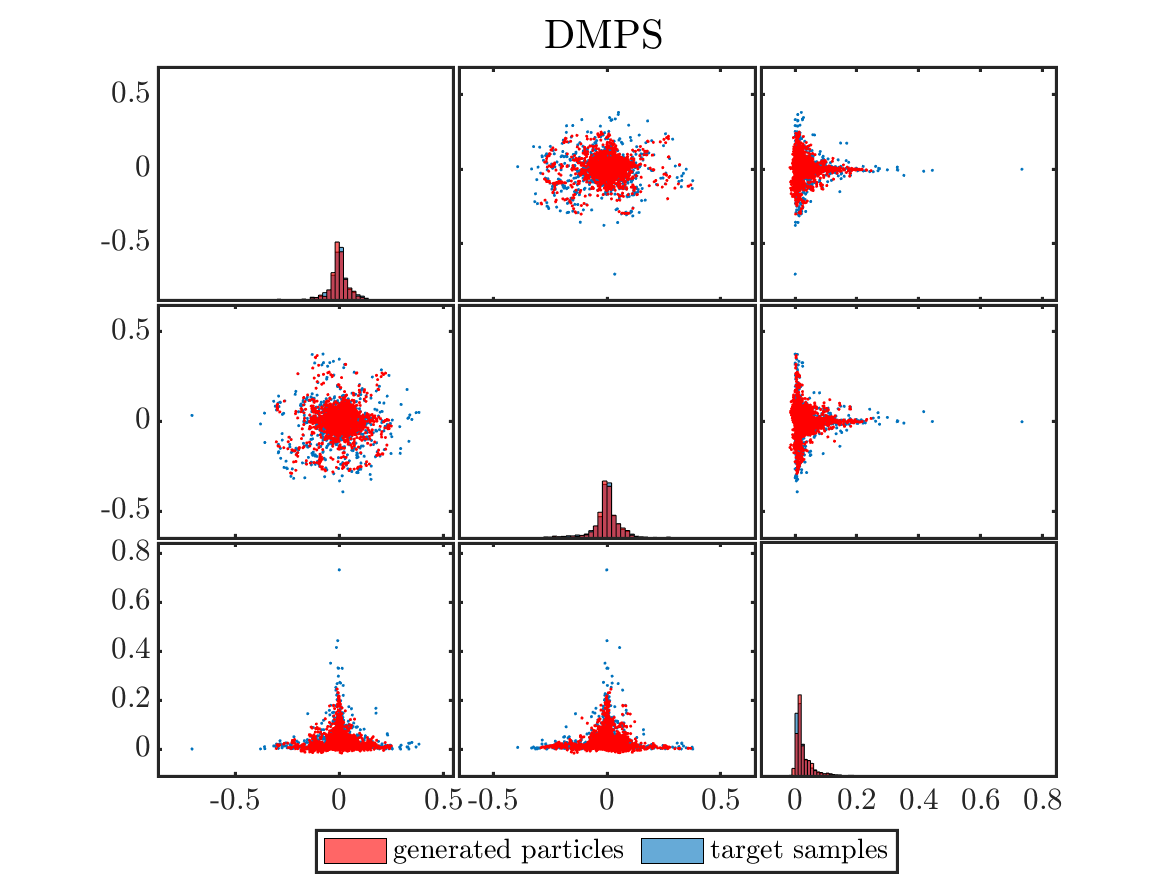}}
\subfigure{\includegraphics[width=0.49\linewidth]{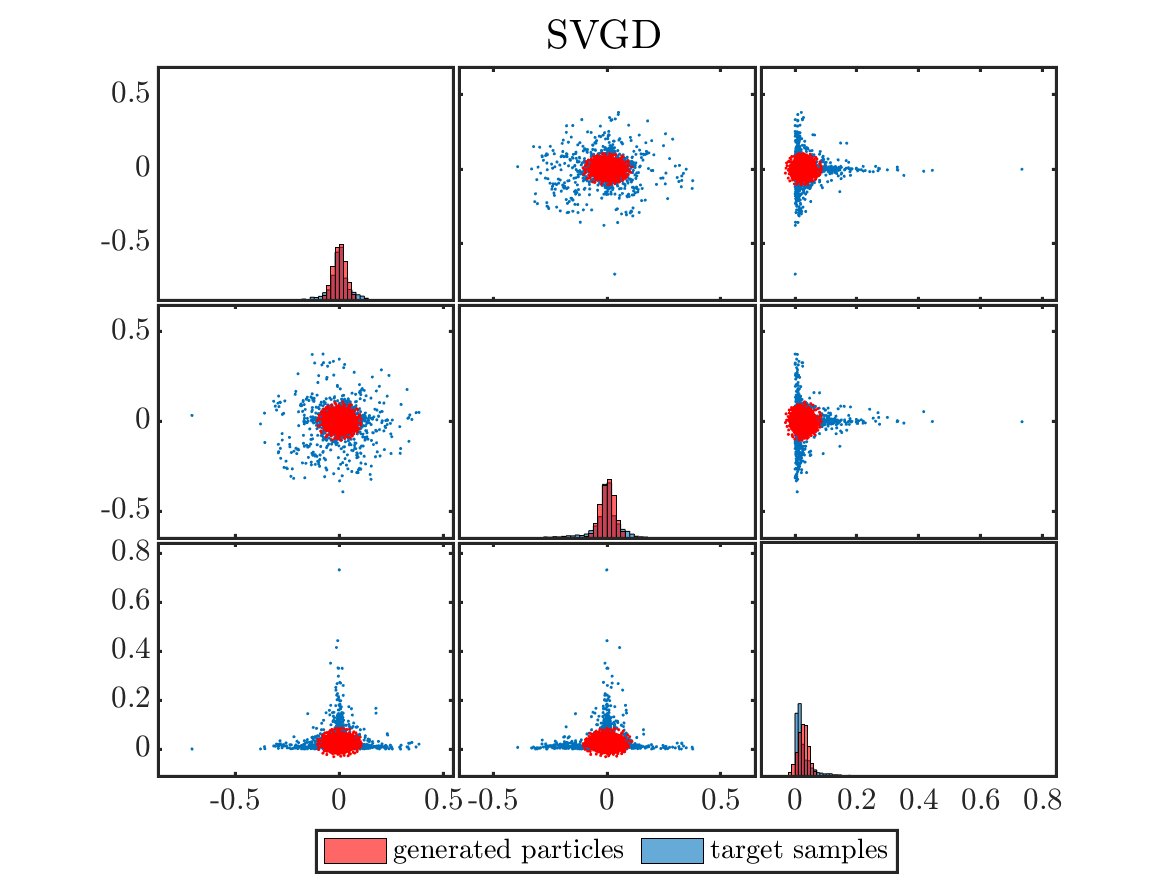}}
\subfigure{\includegraphics[width=0.49\linewidth]{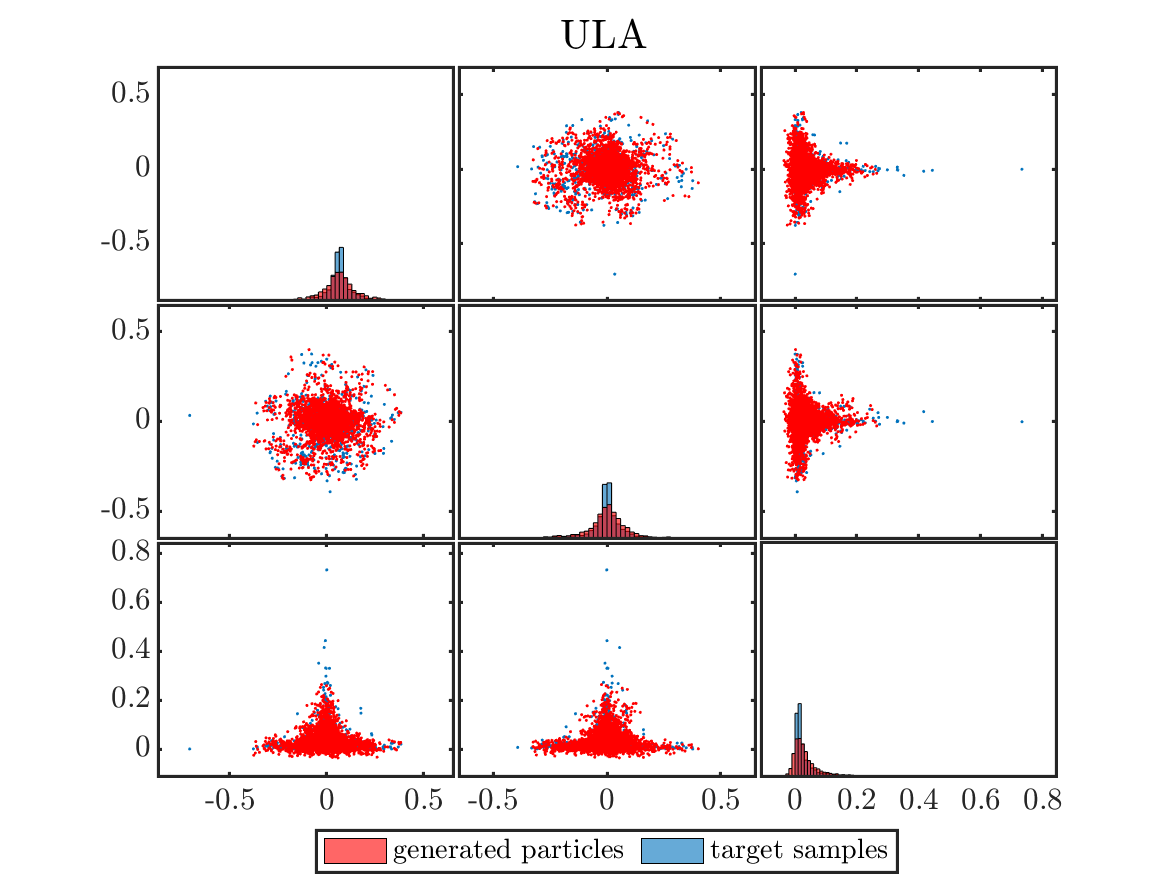}}
\subfigure{\includegraphics[width=0.49\linewidth]{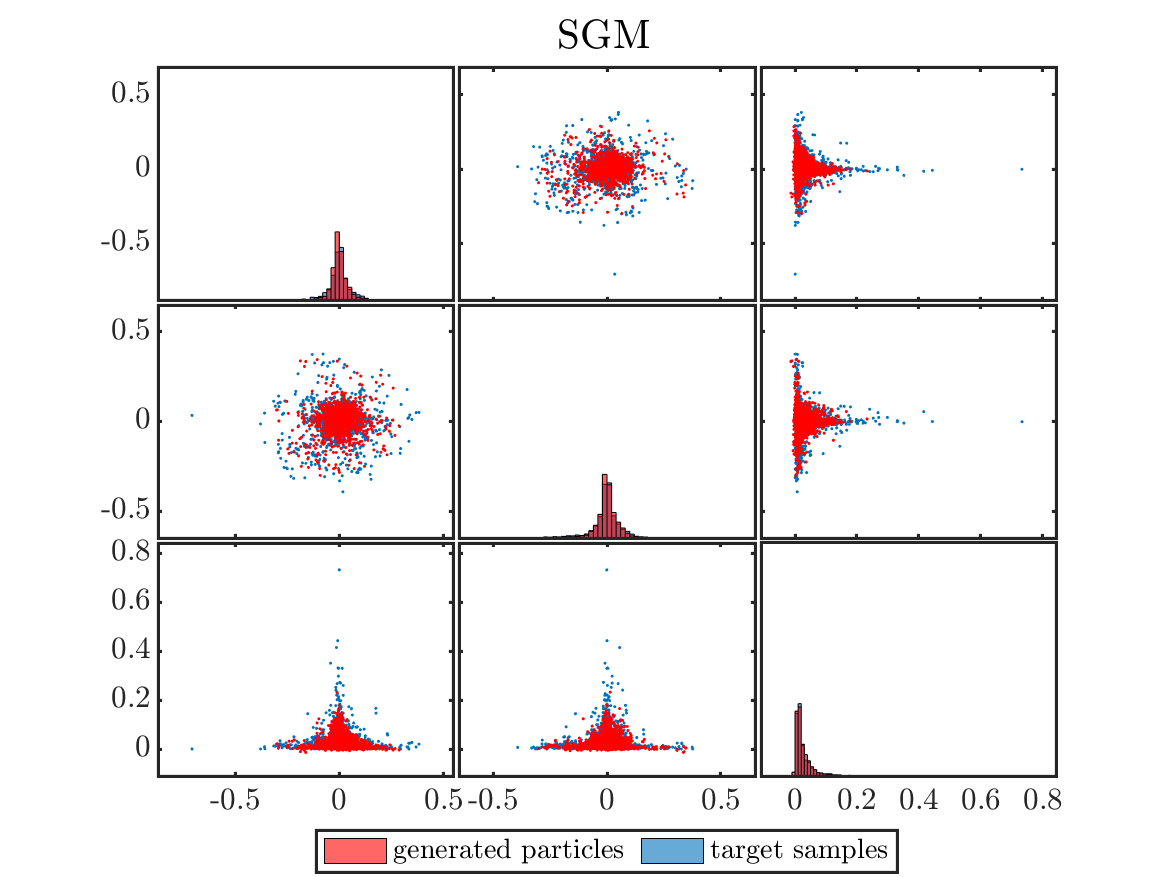}}
\caption{Gluon jet dataset: marginal distributions of the target samples and 2700 generated particles using DMPS, SVGD, ULA, and SGM with 1000 traning samples. Coordinates are $\eta^{\textrm{rel}}$, $\phi^{\textrm{rel}}$, and $p_T^{\textrm{rel}}$, respectively. }
\label{gluon_sample}
\end{figure}

\begin{center}
\begin{table}
\centering
\begin{tabular}{cp{25mm}p{25mm}p{25mm}p{25mm} }  
\hline
\# particles & DMPS & SVGD & ULA &  SGM\\
\hline
100& $\bm{0.0020}$ \newline ${\pm 5.03 \times 10^{-5}}$& $0.0062 \newline \pm 4.00 \times 10^{-4}$& $0.0041 \newline \pm 2.64 \times 10^{-4}$& $0.0029 \newline \pm 1.21 \times 10^{-4}$\\
300& $\bm{0.0014}$ \newline ${\pm 1.65 \times 10^{-5}}$& $0.0059 \newline \pm 3.75 \times 10^{-4}$& $0.0032 \newline \pm 1.73 \times 10^{-4}$& $0.0021 \newline \pm 0.55 \times 10^{-4}$\\
900& $\bm{0.0012}$ \newline ${\pm 1.61 \times 10^{-5}}$& $0.0058 \newline \pm 3.65 \times 10^{-4}$& $0.0029 \newline \pm 1.53 \times 10^{-4}$& $0.0018 \newline \pm 0.46 \times 10^{-4}$\\
2700& $\bm{0.0012}$ \newline ${\pm 1.59 \times 10^{-5}}$& $0.0057 \newline \pm 3.61 \times 10^{-4}$& $0.0027 \newline \pm 1.30 \times 10^{-4}$& $0.0018 \newline \pm 0.44 \times 10^{-4}$\\
\hline
\end{tabular}
\caption{\label{err_table_gluon} Gluon jet dataset: error comparison ($\pm$ standard error) between DMPS, SVGD, ULA, and SGM, for different numbers of generated particles.}
\end{table}
\end{center}

\subsection{Remarks on the experiments}
In these experiments, we see that for all methods the error decreases with more training samples, and that the error of DMPS, ULA, and SGM decreases with more generated particles. We also observe that DMPS has the best performance in terms of the regularized OT metric. The reason might be that the kernel method for approximating the generator relies on diffusion maps, or more broadly, the graph Laplacian, which is widely used for manifold learning due to its flexibility in detecting the underlying geometry. The fact that the generator is approximated as a whole distinguishes it from other score-based methods. 

Although the gradient of the potential term is approximated in the same fashion in our SVGD approach, performance might be affected by the interaction between two kernels: one from the diffusion map approximation and the other being the kernel of the SVGD algorithm itself, as well as the choice of the kernel bandwidths. Performance is more stable for the two non-deterministic methods, ULA and SGM, compared to SVGD. 
For ULA, 
since Gaussian noise is added at in every step, particles are able to ``leave'' the manifold for any finite stepsize. On the other hand, SGM produces generally good results. However, it does suffer from having longer training times. For the arc problem, with $1000$ training samples and $100$ initial particles, DMPS took $6.21$ seconds while SGM (trained using 5000 epochs) took $1633$ seconds ($27$ minutes).

\section{Conclusion} \label{sec:conclusion}
We introduced DMPS as a \emph{simple-to-implement} and \emph{computationally efficient} kernel method for generative modeling. Our approach combines diffusion maps with the LAWGD approach to construct a generative particle system that adapts to the geometry of the underlying distribution. Our method compares favorably with other competing schemes (SVGD and ULA with learned scores, and diffusion-based generative models) on synthetic datasets, consistently achieving the smallest errors in terms of regularized OT distance. While the examples presented here are of moderate dimension (up to $d=15$ for the example in Section~\ref{sec:semisphere}), we expect that more sophisticated kernel methods \cite{Li2017kernel} 
can slot naturally into the DMPS framework and help extend the method to higher-dimensional problems.
Future research will also study the convergence rate of the method in discrete time and with finite samples, and consider more complex geometric domains. 


\section*{Acknowledgements}

This work was supported in part by the US Department of Energy, Office of Advanced Scientific Computing Research, under award numbers DE-SC0023187 and DE-SC0023188.

\bibliographystyle{plainnat}
\bibliography{reference}

\newpage
\appendix
\onecolumn



\newcommand{\dom}{\mathfrak{D}}

\section{Spectral properties of \texorpdfstring{$\mathscr L$}{the generator}}\label{appendix:spectrum}
\rev{Suppose $\mathscr L = -\nabla^2 +\langle \nabla V, \nabla \cdot \rangle$ is an operator defined on a domain $\dom \subset L^2(\pi)$ with discrete spectrum $\lambda_0 = 0 < \lambda_1 \leq \lambda_2 \leq \cdots $. Let $\phi_i$ be the corresponding eigenfunctions. The action of $\mathscr L: \dom \rightarrow L^2(\pi)$ reads 
\begin{align*}
    \mathscr L f(x) = \sum_{i = 1}^\infty \lambda_i \langle \phi_i, f \rangle_{L^2(\pi)} \phi_i(x),
\end{align*}
where, since $\lambda_0 = 0$, the summation above could equivalently start at $i=0$ or at $i=1$. 
We define $\mathscr{L}^{-1}$ via
\begin{align*}
    \mathscr L^{-1} f(x) \coloneqq \sum_{i = 1}^\infty \lambda_i^{-1} \langle \phi_i, f \rangle_{L^2(\pi)} \phi_i(x).
\end{align*}
Note that we exclude the eigenfunction corresponding to $\lambda_0 = 0$. Therefore, if we write 
\begin{align*}
    f = \langle \phi_0, f \rangle_{L^2(\pi)} \phi_0 + \sum_{i = 1}^\infty \langle \phi_i, f \rangle_{L^2(\pi)} \phi_i=\int f d\pi + \sum_{i = 1}^\infty \langle \phi_i, f \rangle_{L^2(\pi)} \phi_i,
\end{align*}
then 
\begin{align*}
    \mathscr L^{-1} \left(f - \int f d \pi \right) = \sum_{i = 1}^\infty \lambda_i^{-1} \langle \phi_i, f \rangle_{L^2(\pi)} \phi_i = \mathscr L^{-1} f,
\end{align*}
where we have used the fact that $\phi_0 = 1$ is the eigenfunction corresponding to $\lambda_0 = 0$.
Note that $\mathscr L^{-1} \mathscr L f = f - \int fd\pi$. Therefore, if $\int fd\pi = 0$, $\mathscr L^{-1} \mathscr L f = f$, and  $\mathscr{L}^{-1}$ can be understood as the inverse of $\mathscr{L}$ on the space orthogonal to the constant functions.}

\section{Finite sample approximation of the operator}\label{appendix:finite_sample_kernel}
In this section, we introduce the finite sample counterpart to the approximations in Section~\ref{subsec:kernel_L}. We consider the approximation of the generator of the Langevin diffusion process $\mathscr L$ from finite samples $\{z^i\}_{i = 1}^N \sim \pi$. In this case, we have 
\begin{align*}
    M_{\epsilon,N}(x, y) = \frac{K_\epsilon(x,y)}{\sqrt{\sum_{i = 1}^N K_\epsilon(z^i,y)} \sqrt{\sum_{i=1}^N K_\epsilon(x,z^i)}},
\end{align*}
and the corresponding $L_{\epsilon,N}^f$ and $L_{\epsilon,N}^b$
can be written as 
\begin{align*}
    P_{\epsilon, N}^f(x,y) := \frac{M_{\epsilon,N}(x,y)}{\sum_{i=1}^N M_{\epsilon,N}(z^i,y) },\\
    P_{\epsilon, N}^b(x,y) := \frac{M_{\epsilon,N}(x,y)}{\sum_{i=1}^N M_{\epsilon,N}(x,z^i)},
\end{align*}
and we set 
\begin{align}
    P_{\epsilon, N}(x,y) = \frac 12 \left(P_{\epsilon, N}^f(x,y) + P_{\epsilon, N}^b(x,y)\right). \label{finite_P}
\end{align}
Similarly, its action on a function $g$ writes
\begin{align*}
    T^{f,b}_{\epsilon,N} g(x) = \sum_{i = 1}^N P^{f,b}_{\epsilon,N}(x,z^i) g(z^i).
\end{align*}
Let
\begin{align*}
    L^{f,b}_{\epsilon,N} :=\frac{\mathrm{Id} - T^{f,b}_{\epsilon,N} }{\epsilon}.
\end{align*}
Similar to their spatial continuum limit, we have
\begin{align*}
    \lim_{\epsilon \rightarrow 0, N\rightarrow \infty} L^{f}_{\epsilon,N} = \lim_{\epsilon \rightarrow 0, N\rightarrow \infty} L^{b}_{\epsilon,N} = \mathscr L. 
\end{align*}
Then, similar to \eqref{approx_kernel}, we have that 
\begin{align*}
     \lim_{\epsilon \rightarrow 0, N \rightarrow \infty}\frac{g(x) - \sum_{i = 1}^N  P^{f,b}_{\epsilon,N}(x,z^i) g(z^i)  }{\epsilon} = \mathscr L g(x), 
\end{align*}
and
\begin{align*}
    L_{\epsilon,N} = \frac 12 (L_{\epsilon,N}^f + L_{\epsilon,N}^b).
\end{align*}
is the symmetric kernel.

\subsection{Computing $\nabla_1 P_{\epsilon}(x,y)$}\label{analytic_grad}
Recall that $P_\epsilon = \frac 12 (P_\epsilon^f + P_\epsilon^b)$. We then compute $\nabla_1 P_\epsilon^f(x, y)$ and $\nabla_1 P_\epsilon^b(x, y)$ separately. 
\subsubsection{Computing $\nabla_1 P^f_{\epsilon}(x,y)$}
Recall 
\begin{align*}
    P_{\epsilon}^f(x,y) := \frac{M_\epsilon(x,y)}{\int_X M_\epsilon(x,y) d\pi(x)},
\end{align*}
where 
\begin{align}\label{grad_M}
    M_\epsilon(x,y):=\frac{K_\epsilon(x,y)}{\sqrt{\int_{\mathcal{X}} K_\epsilon(x,y) d\pi(x)} \sqrt{\int_{\mathcal{Y}} K_\epsilon(x,y) d\pi(y)}}.
\end{align}
Then 
\begin{align*}
    \nabla_1 P_{\epsilon}^f(x,y) = \frac{\nabla_1 M_\epsilon(x,y) }{\int_X M_\epsilon(x,y) d\pi(x)}.
\end{align*}
We then compute $\nabla_1 M_\epsilon(x,y)$. Let $d_\epsilon(x) = \int_{\mathcal{Y}} K_\epsilon(x,y) d\pi(y)$ and $d_\epsilon(y) = \int_{\mathcal{X}} K_\epsilon(x,y) d\pi(x)$, then
\begin{align*}
    \nabla_1 M_\epsilon(x,y) = \frac{\nabla_1 K_\epsilon(x,y)\sqrt{d_\epsilon(y)} \sqrt{d_\epsilon(x)} - \left(\frac{\partial }{\partial x}\sqrt{d_\epsilon(x)}\right)\sqrt{d_\epsilon(y)}K_\epsilon(x,y)}  {d_\epsilon(x) d_\epsilon(y)},
\end{align*}
where 
\begin{align}
    \nabla_1 K_\epsilon(x,y) &= -\left(\frac{x - y}{\epsilon} \right) e^{-\frac{\left\Vert x - y \right\Vert^2}{2\epsilon}}, \label{grad_K}\\
     \frac{\partial}{\partial x} \left(\sqrt{d_\epsilon(x)} \right) & = \frac{1}{2} \int_{\mathcal Y} \left(-\left(\frac{x - y}{\epsilon} \right) e^{-\frac{\left\Vert x - y \right\Vert^2}{2\epsilon}}\right)^{-1/2} d\pi(y). \label{grad_sqrt_d}
\end{align}

\subsubsection{Computing $\nabla_1 P_{\epsilon}^b(x,y)$}
On the other hand, we have 
\begin{align*}
    P_{\epsilon}^b(x,y) := \frac{M_\epsilon(x,y)}{\int_\mathcal Y M_\epsilon(x,y) d\pi(y)},
\end{align*}
and
\begin{align*}
    \nabla_1 P_{\epsilon}^b(x,y) = \frac{\nabla_1 M_\epsilon(x,y) \int_\mathcal Y M_\epsilon(x,y) d\pi(y) - \int_\mathcal Y \nabla_1 M_\epsilon(x,y) d\pi(y) M_\epsilon(x,y)}{\left( \int_\mathcal Y M_\epsilon(x,y) d\pi(y)\right)^2},
\end{align*}
where all the ingredients are computable from \eqref{grad_M}, \eqref{grad_K}, and \eqref{grad_sqrt_d}.

\subsection{Computing $\nabla_1 P_{\epsilon, N}(x,y)$}
The discrete version is obtained by replacing the integral with its empirical average. Similarly, let $\{z^i\}_{i = 1}^N\sim\pi$, and define $d_{\epsilon,N}(x) = \sum_{i=1}^N K_\epsilon(x,z^i)$ and $d_{\epsilon,N}(y) = \sum_{i = 1}^N K_\epsilon(z^i,y)$. Recall that 
\begin{align*}
    M_{\epsilon,N}(x, y) = \frac{K_\epsilon(x,y)}{\sqrt{\sum_{i = 1}^N K_\epsilon(z^i,y)} \sqrt{\sum_{i=1}^N K_\epsilon(x,z^i)}}.
\end{align*}
Then
\begin{align*}
    \nabla_1 M_{\epsilon,N}(x,y) = \frac{\nabla_1 K_{\epsilon,N}(x,y)\sqrt{d_{\epsilon,N}(y)} \sqrt{d_{\epsilon,N}(x)} - \left(\frac{\partial }{\partial x}\sqrt{d_{\epsilon,N}(x)}\right)\sqrt{d_{\epsilon,N}(y)}K_\epsilon(x,y)}  {d_{\epsilon,N}(x) d_{\epsilon,N}(y)},
\end{align*}
and 
\begin{align*}
    \frac{\partial}{\partial x} \left(\sqrt{d_{\epsilon,N}(x)} \right) & = \frac{1}{2} \sum_{i = 1}^N \left(-\left(\frac{x - z^i}{\epsilon} \right) e^{-\frac{\left\Vert x - z^i \right\Vert^2}{2\epsilon}}\right)^{-1/2}.\\
\end{align*}
Finally, similar to the previous section, we have that 
\begin{align*}
    \nabla_1 P_{\epsilon,N}^f(x,y) = \frac{\nabla_1 M_{\epsilon,N}(x,y) }{\sum_{i=1}^N M_{\epsilon,N}(z^i,y) },
\end{align*}
and
\begin{align*}
    \nabla_1 P_{\epsilon,N}^b(x,y) = \frac{\nabla_1 M_{\epsilon,N}(x,y) \left(\sum_{i = 1}^N M_{\epsilon,N}(x,z^i)\right)  - \left(\sum_{i = 1}^N \nabla_1 M_{\epsilon,N}(x,z^i)\right)  M_{\epsilon,N}(x,y)}{\left(\sum_{i = 1}^N M_{\epsilon,N}(x,z^i) \right)^2}.
\end{align*}

\section{Symmetric positive-definiteness of $P_\epsilon$}\label{pd_kernel}
In this section we show that $P_\epsilon = \frac 12 (P_\epsilon^f + P_\epsilon^b)$ is symmetric positive definite. 
\begin{theorem}
    $P_\epsilon: \mathcal{D} \times \mathcal{D} \to \mathbb{R}$ is a positive definite kernel. 
\end{theorem}
\begin{proof}
Recall that 
\begin{align*}
    P_{\epsilon}^f(x,y) := \frac{M_\epsilon(x,y)}{\int M_\epsilon(x,y) d\pi(x)},\\
    P_{\epsilon}^b(x,y) := \frac{M_\epsilon(x,y)}{\int M_\epsilon(x,y) d\pi(y)},
\end{align*}
where
\begin{align*}
    M_\epsilon(x,y):=\frac{K_\epsilon(x,y)}{\sqrt{\int K_\epsilon(x,y) d\pi(x)} \sqrt{\int K_\epsilon(x,y) d\pi(y)}},
\end{align*}
and $K_\epsilon$ is the Gaussian kernel, which is symmetric positive definite. Therefore, we have
\begin{align*}
    \sum_{i=1}^n \sum_{j=1}^n c_i c_j K_\epsilon(x^i, x^j) > 0,
\end{align*}
for all $n\in \mathbb N$, $x^1, \ldots, x^n \in \mathcal D$, and $c_1, \ldots, c_n \in \mathbb R$. This is equivalent to saying that $K$, with $K_{ij} = K_\epsilon(x^i, x^j)$, has positive eigenvalues for all choices of $n\in \mathbb N$, $x^1, \ldots, x^n \in \mathcal D$. 

Then we see immediately that $M_\epsilon(x^i, x^j)$ has positive eigenvalues, as we can write $M = DKD$, where $M, K, D \in \mathbb R^{n\times n}$,  $M_{ij} = M_\epsilon(x^i, x^j)$, and $D$ is a diagonal matrix with positive diagonal entries $D_{ii} = 1/\sqrt{\int K_\epsilon(x^i,z) d\pi(z)}$. Using the same argument, we have that $P^f_\epsilon$ is also positive definite as $P^f = MQ^f $, where $P^f, Q^f \in \mathbb R^{n\times n}$ and $Q^f$ is a diagonal matrix with $Q^f_{ii} = 1/\sqrt{\int M_\epsilon(z,x^i) d\pi(z)}$. $Q^f$ has positive diagonal entries, as $M_\epsilon(x,y)$ is positive for all $x,y \in \mathcal D$. Similarly, $P^b$ also has positive eigenvalues. Therefore, we conclude that $P_\epsilon$ is symmetric positive definite. 

\end{proof}

\color{black}

\section{Regularity assumptions} \label{regularity_assump}
We state here the regularity assumptions needed for the gradient to converge. The statement and the proof are adapted from \citet{stack_ex}.

\begin{theorem}
    Suppose $L_\epsilon f(x)$ is a family of bounded differentiable functions from $\mathcal D$ to $\mathbb R$ converging pointwise to $\mathscr L f(x)$ as $\epsilon \rightarrow 0$. Furthermore, suppose $\nabla L_\epsilon f(x)$ is a family of uniformly equicontinuous functions. Then $\mathscr L f(x)$ is differentiable on $\mathcal D$ and $\nabla L_\epsilon f(x)$ converges to $\nabla \mathscr L f(x)$ uniformly.
\end{theorem}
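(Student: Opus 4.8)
The statement is a standard consequence of the Arzelà--Ascoli theorem together with the classical theorem on differentiation of a uniform limit. The plan is to show that every sequence $\epsilon_n \to 0$ admits a subsequence along which $\nabla L_{\epsilon_n} f$ converges uniformly, necessarily to $\nabla \mathscr L f$; since the limit turns out to be independent of the subsequence, the sub-subsequence principle then upgrades this to uniform convergence of the whole family, and the limiting identity forces $\mathscr L f$ to be differentiable with $\nabla \mathscr L f$ equal to that limit.

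First I would promote the hypotheses to the form needed for Arzelà--Ascoli. Uniform equicontinuity of $\{\nabla L_\epsilon f\}$ is assumed; what remains is uniform boundedness, which I would derive from the assumed uniform bound on $\{L_\epsilon f\}$. If $R_\epsilon := \sup_{\mathcal D}\lVert \nabla L_\epsilon f\rVert$ were unbounded along some sequence, one picks a near-maximizer $x_\epsilon$, sets $v_\epsilon = \nabla L_\epsilon f(x_\epsilon)/\lVert\nabla L_\epsilon f(x_\epsilon)\rVert$, and uses equicontinuity at scale $1$ to get $\nabla L_\epsilon f(y)\cdot v_\epsilon \ge R_\epsilon - 1$ for $y$ within a fixed radius $\delta$ of $x_\epsilon$; integrating along the segment $x_\epsilon + s v_\epsilon$ (in a chart, using that $\mathcal D$ admits short segments in every direction) produces an oscillation of $L_\epsilon f$ of order $\delta(R_\epsilon - 1)$, contradicting $\sup_\epsilon \lVert L_\epsilon f\rVert_\infty < \infty$. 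Hence $\{\nabla L_\epsilon f\}$ is uniformly bounded and equicontinuous on the compact set $\mathcal D$, and Arzelà--Ascoli produces, for any $\epsilon_n \to 0$, a subsequence $\epsilon_{n_k}$ with $\nabla L_{\epsilon_{n_k}} f \to g$ uniformly for some continuous $g$.

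Next I would identify $g$. Fixing $x_0$ and $x$ in a convex coordinate neighborhood of $x_0$, the first-order Taylor identity reads
\begin{align*}
    L_{\epsilon_{n_k}} f(x) - L_{\epsilon_{n_k}} f(x_0) = \int_0^1 \nabla L_{\epsilon_{n_k}} f\bigl(x_0 + t(x - x_0)\bigr)\cdot (x - x_0)\, dt .
\end{align*}
Letting $k \to \infty$, the left side tends to $\mathscr L f(x) - \mathscr L f(x_0)$ by pointwise convergence, while the right side tends to $\int_0^1 g(x_0 + t(x-x_0))\cdot(x-x_0)\,dt$ by uniform convergence of the gradients along the compact segment. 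Thus $\mathscr L f(x) - \mathscr L f(x_0) = \int_0^1 g(x_0 + t(x-x_0))\cdot(x-x_0)\,dt$, and since $g$ is continuous, the right-hand side is differentiable in $x$ at $x_0$ with derivative $g(x_0)$; hence $\mathscr L f$ is differentiable and $\nabla \mathscr L f = g$. Because $g = \nabla \mathscr L f$ is the same for every convergent subsequence, the sub-subsequence principle gives $\nabla L_\epsilon f \to \nabla \mathscr L f$ uniformly on $\mathcal D$.

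The only genuinely delicate point is the derivation of uniform boundedness of the gradients from boundedness of $\{L_\epsilon f\}$ and equicontinuity of $\{\nabla L_\epsilon f\}$ — in particular making the segment argument precise on the manifold $\mathcal D$ (passing to charts, or simply assuming $\mathcal D$ has nonempty interior and that the relevant points lie in it). Once uniform boundedness is in hand, the remaining steps are textbook.
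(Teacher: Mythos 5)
Your proof follows the same overall strategy as the paper's --- apply Arzel\`a--Ascoli to $\{\nabla L_\epsilon f\}$, then use a sub-subsequence argument to promote subsequential convergence to convergence of the whole family --- but you fill in two steps the paper leaves implicit, and one of them is genuinely needed for the argument to close. The paper simply asserts that uniform boundedness of $\{L_\epsilon f\}$ plus equicontinuity of $\{\nabla L_\epsilon f\}$ forces uniform boundedness of $\{\nabla L_\epsilon f\}$; you supply the segment-integration argument for this, which is fine modulo the chart/nonempty-interior caveat you yourself flag. More substantively, the paper's contradiction step reads off a further subsequence ``converging uniformly to $\nabla \mathscr L f(x)$'' directly from Arzel\`a--Ascoli, but Arzel\`a--Ascoli only delivers convergence to \emph{some} continuous function $g$; identifying $g$ with $\nabla \mathscr L f$ is exactly the content of the classical theorem on differentiating a uniform limit, which you make explicit via the identity $L_{\epsilon_{n_k}} f(x) - L_{\epsilon_{n_k}} f(x_0) = \int_0^1 \nabla L_{\epsilon_{n_k}} f\bigl(x_0 + t(x-x_0)\bigr)\cdot(x-x_0)\,dt$ and passage to the limit on both sides. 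Without that identification, the sub-subsequence argument tacitly assumes the conclusion it is trying to establish, so your version is not only correct but more complete than the paper's own.
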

\begin{proof}
We first choose a countable set of $\epsilon$, say, $\epsilon = \{1/n\}_{i = 1}^\infty$, and we use $L_n$ to denote $L_{\epsilon = 1/n}$ for convenience. Since $L_n f(x)$ are uniformly bounded and $\nabla L_n f(x)$ are uniformly equicontinuous,  $\nabla L_n f(x)$ are uniformly bounded. Then $\nabla L_n f(x)$ has a subsequence $\nabla L_{n(k)} f(x)$ that converges uniformly to some function $g\in C(\mathcal D)$ by the Arzela-Ascoli theorem. We then show that $g = \nabla \mathscr L f(x)$ by contradiction. Suppose $\nabla L_n f(x)$ does not converge uniformly to $\nabla \mathscr L f(x)$. Then there exists $\epsilon >0$ and another subsequence $\nabla L_{m(k)} f(x)$ of $\nabla L_{n} f(x)$ such that $\Vert \nabla L_{m(k)} f - \nabla \mathscr L f\Vert_\infty > \epsilon$ for all $k$. But by the Arzela-Ascoli theorem, $\nabla L_{m(k)}$ has a subsequence converging uniformly to $\nabla \mathscr L f(x)$: contradiction. Therefore, $\nabla L_n f(x)$ converges to $\nabla \mathscr L f(x)$ uniformly on $\mathcal D$.
\end{proof}

\medskip
Received xxxx 20xx; revised xxxx 20xx; early access xxxx 20xx.
\medskip

\end{document}